\documentclass{article}

\PassOptionsToPackage{numbers, compress}{natbib}
 \usepackage[preprint]{neurips_2026}

\usepackage[utf8]{inputenc} 
\usepackage[T1]{fontenc}    
\usepackage{url}            
\usepackage{booktabs}       
\usepackage{amsfonts}       
\usepackage{nicefrac}       
\usepackage[table]{xcolor}

\usepackage[colorlinks,
            linkcolor=red,
            anchorcolor=red,
            citecolor=brown
            ]{hyperref}
\usepackage{IEEEtrantools}
\usepackage{amsthm}
\newtheorem{theorem}{Theorem}

\usepackage{thmtools}
\usepackage{thm-restate}
\usepackage{enumitem}
\usepackage{multirow}
\usepackage{amssymb}
\usepackage{algorithm}
\usepackage{algorithmic}
\usepackage{graphicx} 
\usepackage{wrapfig}
\usepackage{caption}
\usepackage[absolute,overlay]{textpos}
\usepackage{subcaption}
\usepackage{times}
\usepackage{array}
\usepackage{chemformula}
\usepackage{nicematrix}
\usepackage{pgfgantt}
\newtheorem{definition}{Definition}

\definecolor{byebyebabeblue}{RGB}{31, 119, 180}
\definecolor{lovelyred}{RGB}{235, 87, 112}
\definecolor{tinyred}{RGB}{245, 185, 185}
\definecolor{gray}{RGB}{118, 113, 113}
\definecolor{lightgray}{RGB}{173, 168, 168}
\definecolor{tableblue}{RGB}{205, 232, 248}

\title{Horizon Adaptive Offline Policy Learning via\\ Value Stitching}

\author{%
  Kexin Zheng$^{1}$\quad\quad
  Xianyuan Zhan$^{2}$\footnotemark[2]~~\quad\quad
  Xintao Yan$^{1}$\footnotemark[2]\\
$^1$ The University of Hong Kong \quad$^2$ Tsinghua University\\
\texttt{whiterr@link.cuhk.edu.hk}\\ 
\texttt{zhanxianyuan@air.tsinghua.edu.cn}\quad \texttt{xintaoy@hku.hk}
}

\begin{document}
\maketitle
\renewcommand{\thefootnote}{\fnsymbol{footnote}}
\footnotetext[2]{Corresponding author.}

\begin{abstract}
Learning accurate value functions plays a decisive role for reinforcement learning (RL) agents to solve long-horizon, complex tasks. Conventional temporal-difference (TD) learning objectives suffer from value-estimation bias that accumulates over the horizon, while extended-horizon modeling methods, such as $n$-step TD backups and Q-chunking, adopt a rigid, fixed-horizon value-modeling recipe that is often not flexible enough to capture complex value structures in long-horizon, multi-stage tasks. In this paper, we show that enabling value updates with dynamic horizon composition can yield a strong offline policy learning scheme. Our method, \textit{Horizon Adaptive Offline Policy Learning via \underline{VA}lue \underline{ST}itching} (\textcolor{byebyebabeblue}{\textbf{\textit{VAST}}}), replaces fixed-horizon backups with recursive, horizon-adaptive value composition. Its key ingredient is to couple value optimization with a future state- and horizon-length-conditioned \textbf{\emph{auxiliary value function}} that is learned through direct data supervision, and a \textbf{\emph{stitching policy}} that optimally selects the reward-maximizing horizon length and future sub-goal to achieve horizon-adaptive value stitching. This design enables direct estimation and compositional "stitching" of variable-length returns grounded in actionable sub-goal states, providing an accurate and greedily exploitable value-supervision signal for offline policy optimization. Across 50 tasks on OGBench, \textit{VAST} outperforms fixed-step, extended-horizon methods, and generative-value offline RL baselines, achieving strong performance particularly in high-complexity, long-horizon decision-making tasks. The official implementation is available at:~\url{https://github.com/Whiterrrrr/value-stitching}.

\end{abstract}
\section{Introduction}
\label{sec:intro}

The quintessence of reinforcement learning (RL) lies in value~\citep{sutton1988learning,watkins1992q,mnih2015human,silver2016mastering,silver2017mastering}: it evaluates how present choices shape future consequences, provides the key signal for value judgment~\citep{mdp,Pignatelli2023ASO,hung2019optimizing}, and ultimately steers both the direction of optimization and the quality of the learned policy~\citep{park2024valuelearningreallymain}. However, in long-horizon, complex task settings, conventional temporal-difference (TD) learning is highly susceptible to error compounding over the horizon due to myopic bootstrapping iterations on compressed scalar targets, which often yields biased value estimates and inefficient learning~\citep{mdp,park2024valuelearningreallymain,jaakkola1993convergence,tsitsiklis1996analysis,de2018multi,park2025horizonreductionmakesrl}. To mitigate this limitation, recent works have started pursuing horizon-extended backups~\citep{konidaris2011td_gamma,seo2024coarse,tian2025chunking,li2025reinforcement,li2025decoupledqchunking}, distributional value modeling~\citep{bellemare2017distributional,agrawalla2025floq,dongzheng2026value}, or structured value representations that encode temporal relationships~\citep{machado2023temporal}. While extending the effective horizon can improve value scalability by supplying richer and more structured supervision signals~\citep{park2025horizonreductionmakesrl}, existing approaches, which are typically formulated as fixed-step TD backups~\citep{konidaris2011td_gamma} or chunked action/reward sequences~\citep{konidaris2011td_gamma,seo2024coarse,li2025reinforcement,li2025decoupledqchunking}, still exhibit deficiencies due to their rigid, fixed-horizon parameterization.

\begin{figure}[t]
\begin{center}
\includegraphics[width=1\textwidth]{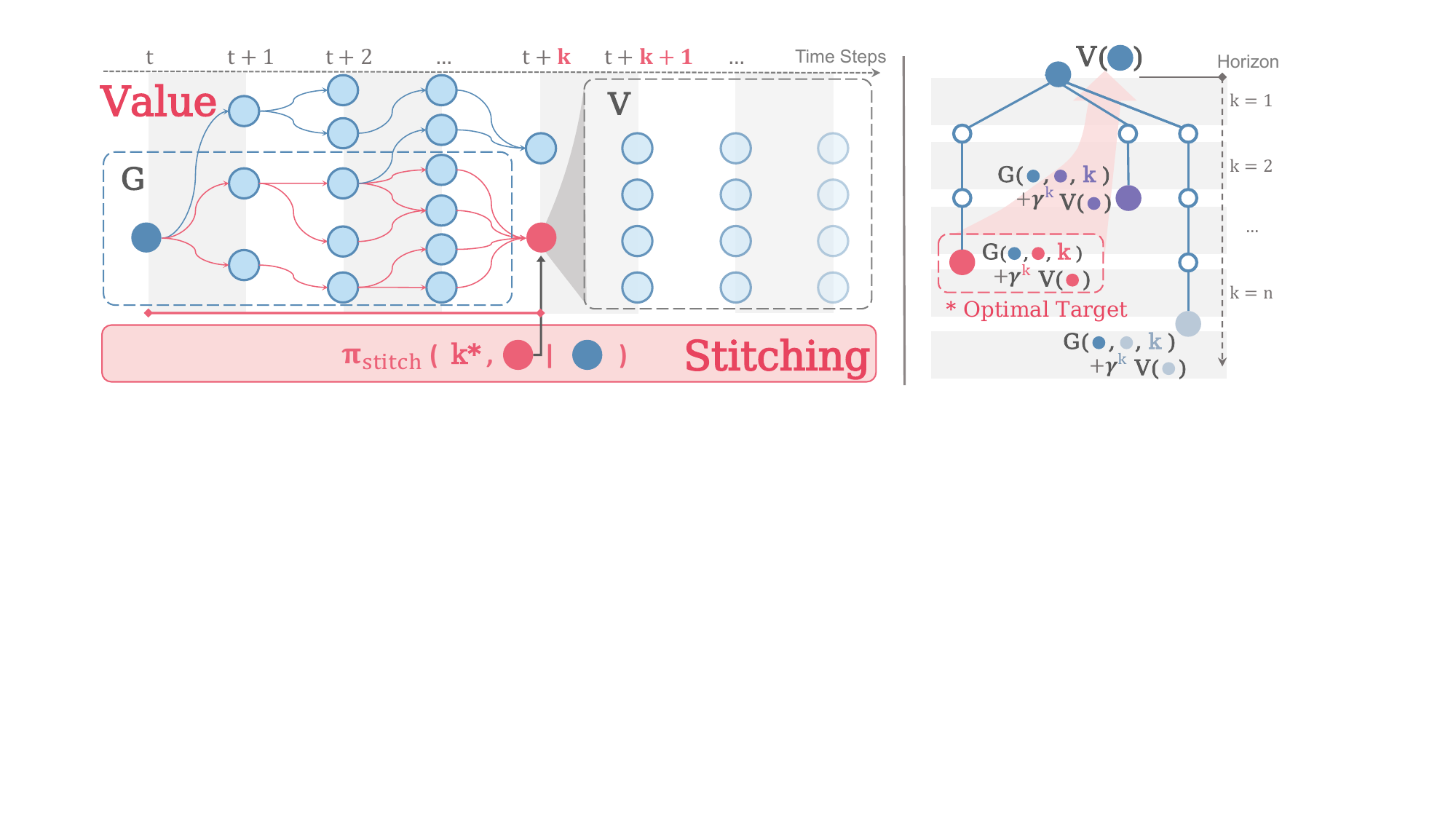}
\vspace{-12pt}
\end{center}
\caption{\label{fig:illustration} \small \textbf{\textcolor{byebyebabeblue}{\textit{VAST}} compositionally stitches cumulative returns between adaptive-lengths, actionable sub-goals, thereby enhancing value learning optimality and stability.} (\textit{Left}): \textit{VAST} leverages the \textit{horizon-based cumulative return} ($G$-function) to measure the expected discounted return between two reachable states over a specific horizon, coupled with a stitching policy performing horizon-aware optimal sub-goal selection. (\textit{Right}): \textit{VAST} enables principled horizon adaptive value supervision guided by $G$-function with learnable backups.}
\vspace{-5pt}
\end{figure}

In stark contrast, human decision-making in real-world scenarios exhibits a high degree of dynamism and adaptability regarding credit assignment over task horizons. When planning for distant objectives, humans typically only conceptualize coarse-grained sub-goals with a rough estimate of horizon length (e.g., walk straight for about 100m and then turn right); while for immediate and critical decisions, they will focus exclusively on proximal, actionable consequences (e.g., decelerate and move left to avoid an obstacle). This mixed-horizon modeling requirement is akin to the idea of the "option" framework~\citep{SUTTON1999181,precup2000temporal}, which solves a semi-Markov decision process by optimizing temporally extended actions (i.e., macro-actions) with a terminal condition, thereby securing global optimality through localized decisions~\citep{stolle2002learning,bacon2016optioncriticarchitecture,riemer2019learningabstractoptions}. However, existing option-based methods remain structurally heavy and cumbersome to solve complex practical control problems. To simplify the modeling framework, many studies also resort to the hierarchical RL framework~\citep{xu2022a,hiql,nachum2018data,park2025horizonreductionmakesrl}, which jointly optimizes a high-level and a low-level policy at two distinct time scales. However, the hierarchical frameworks still can only capture two fixed modeling horizons, and the truly adaptive and practically deployable mixed-horizon offline RL paradigms remain largely underexplored.

To address this challenge, we propose  \textcolor{byebyebabeblue}{\textbf{\textit{VAST}}} (illustrated in Figure~\ref{fig:illustration}), a horizon-adaptive offline RL framework that equips value and policy learning with temporal adaptivity, while simultaneously offering improved value learning stability and policy optimality. 
At its core, \textit{VAST} integrates value optimization with a future-state- and horizon-conditioned \emph{\textbf{auxiliary value function}} (referred to as \textit{horizon-based cumulative return}, $G$-function) and a \emph{\textbf{stitching policy}} ($\pi_\text{stitch}$) that adaptively selects optimal sub-goal states and corresponding stitching horizons. Specifically, the $G$-function estimates the expected discounted return between arbitrary reachable state pairs under a specified horizon, enabling direct evaluation of actionable sub-goals. This naturally gives rise to a new \emph{stitching Bellman operator} that, together with the stitching policy, identifies the optimal target state for horizon-adaptive value backup.
For policy extraction, \textit{VAST} decomposes return maximization into the aforementioned \emph{\textbf{stitching policy}} $\pi_\text{stitch}$, which selects horizon-aware sub-goals that best steer behavior toward high-return futures, and an \emph{\textbf{execute policy}} $\pi_\text{exec}$, which is supervised trained directly on offline data and further optimized at test time via rejection sampling. The overall design enables both direct estimation and compositional stitching of returns anchored at actionable sub-goals, yielding supervision that is accurate, stable, and greedily exploitable for both value learning and low-level control. Across 50 OGBench~\citep{ogbench_park2025} tasks under the offline RL settings, \textit{VAST} performs the best or on par with $n$-step backup, Q-chunking methods, and generative-value-based baselines, with especially pronounced performance gains in complex long-horizon tasks.

\section{Preliminaries}
\label{sec:preliminary}

\paragraph{Problem formulation.} We consider the RL problem in a Markov Decision Process (MDP)~\citep{mdp}, defined by a tuple $\mathcal{M}:=\left(\mathcal{S},\mathcal{A},\mathcal{P},r, \gamma\right)$, where $\mathcal{S}$ and $\mathcal{A}$ denote the state and action space, $\mathcal{P}:\mathcal{S} \times \mathcal{A} \to \Delta(\mathcal{S})$ is the operator that represents the transition dynamics, $r:\mathcal{S} \times \mathcal{A} \to \mathbb{R}$ denotes the reward function, and $\gamma \in (0,1)$ is the discount factor. The goal of the RL problem aims to find a policy $\pi:\mathcal{S} \to \Delta(\mathcal{A})$ that maximizes the expected return: $\mathbb{E}_{\pi}\left[\sum^{\infty}_{k=0}\gamma^{k}\cdot r(s_k,a_k) \right]$. In this work, we focus on the offline RL setting, the goal of which is to learn a policy under the fixed dataset \( \mathcal{D}=\{s_i, a_i, r_i, s_{i+1}\}_{i=1}^N\) collected by a behavioral policy $\mu$.

\paragraph{Single-step and multi-step value backups.} Approximate dynamic programming~\citep{powell2007approximate} is a central paradigm in RL, which maintains an action-value function $Q(s,a)$, or optionally a state-value function $V(s)$ for policy \(\pi\) under a given reward function \(r: \mathcal{S} \to \mathbb{R}\), defined as
\(Q(s, a):= \sum_{t \geq 0} \gamma^t \cdot \mathbb{E}_{a_t \sim \pi}[r(s_{t}, a_t) | s, a, \pi]\) and \(V(s):= \sum_{t \geq 0} \gamma^t \cdot \mathbb{E}_{a_t \sim \pi}[r(s_t, a_t) | s, \pi]\) respectively, with a further definition of the advantage function as $A(s, a) = Q(s, a)-V(s)$. In the offline RL setting, the value functions are typically learned with the following policy evaluation operator~\citep{nachum2017bridginggapvaluepolicy,kummar2020cql,Kostrikov2021OfflineRL,Xu2023OfflineRW}:
\begin{align}
(\mathcal{T}^\pi Q)(s, a) &:= r(s, a) + \gamma \cdot \mathbb{E}_{\mathcal{P}(s'|s,a)}\mathbb{E}_{a'\sim \pi}\left[Q(s',a')\right], \label{equ:1step_q}\\
(\mathcal{T}^\pi V)(s) &:= \mathbb{E}_{a\sim \pi} \left[ r(s, a) + \gamma \cdot \mathbb{E}_{\mathcal{P}(s'|s,a)} \left[ V(s') \right] \right].  \label{equ:1step_v}
\end{align}
The \(Q\) and \(V\)-function are typically learned by the conventional single-step objectives $\min_{Q} J(Q) =\frac{1}{2} \mathbb{E}_{(s, a) \sim \mathcal{D}}\left[(\mathcal{T}^\pi Q - Q)(s, a)^2 \right]$ and $\min_{V} J(V)=\frac{1}{2} \mathbb{E}_{s \sim \mathcal{D}}\left[(\mathcal{T}^\pi V - V)(s)^2\right]$, respectively. To mitigate error accumulation and improve learning stability, some early and recent works enforce stronger supervision by adopting the $n$-step backup~\citep{mdp,watkins1989learning,peng1994incremental,konidaris2011td_gamma,thomas2015policy,park2025horizonreductionmakesrl}:
\begin{align} 
(\mathcal{T}^\pi_\text{n-step}) Q(s_t, a_t) &:= \sum_{t'=t}^{t+n-1} \left[\gamma^{t'-t} r(s_{t'}, a_{t'})\right] + \gamma^n \cdot \mathbb{E}_{s_{t+n}\sim \mathcal{P}}\mathbb{E}_{a_{t+n}\sim \pi}\left[Q(s_{t+n},a_{t+n})\right], \label{equ:nstep_q} \\
(\mathcal{T}^\pi_\text{n-step} V)(s_t) &:= \mathbb{E}_{a_t\sim \pi} \left[\sum_{t'=t}^{t+n-1} \left[\gamma^{t'-t} r(s_{t'}, a_{t'})\right] + \gamma^n \cdot \mathbb{E}_{s_{t+n}\sim\mathcal{P}} \left[ V(s_{t+n}) \right] \right]. \label{equ:nstep_v}
\end{align}

\paragraph{Chunked value function.} Another series of methods tackles the value bootstrapping scheme with action chunking~\citep{seo2024coarse,tian2025chunking,li2025reinforcement,li2025decoupledqchunking}, which extends the value backup from the action space to the action sequence level:
\begin{equation} \label{equ:chunking_q}
(\mathcal{T}^\pi_\text{chunking} Q)(s_t, a_{t:t+n}) := \sum_{t'=t}^{t+n-1} \left[\gamma^{t'-t} r(s_{t'}, a_{t'})\right] + \gamma^n \cdot \mathbb{E}_{s_{t+n}\sim\mathcal{P}}\mathbb{E}_{a_{t+n:t+2n}\sim \pi}\left[Q(s_{t+n},a_{t+n:t+2n})\right].
\end{equation}
In comparison to the $n$-step backup, the chunked value is naturally suited for the popular action chunking design in imitation learning~\citep{zhao2023learning} and mitigates the off-policy bias in previous methods~\citep{li2025reinforcement}.
However, both paradigms still suffer from some deficiencies due to the fixed horizon length modeling. $n$-step backup is inherently conservative in value estimation (n-step return is estimated from the data collection policy rather than the optimal policy), whereas action chunking induces a greatly expanded action space, causing potential optimization challenges.

\paragraph{Option-value function.} The option framework~\citep{SUTTON1999181,precup2000temporal} models a Markovian option $\omega \in \Omega$ as a triple $\left(\mathcal{I}_\omega,\pi_\omega,\beta_\omega\right)$, where $\mathcal{I}_\omega \subseteq \mathcal{S}$ is the initiation set, $\pi_\omega:\mathcal{S}\times\mathcal{A}\rightarrow[0,1]$ is the intra-option policy over primitive actions, and $\beta_\omega:\mathcal{S}\rightarrow[0,1]$ is the termination function. Augmenting an MDP with a fixed set of options $\Omega$ (by a policy over options \(\pi_\Omega(\omega \;| \;s)\)) induces a semi-Markov decision process (SMDP)~\citep{SUTTON1999181,puterman2014markov}, on which the option-value function is defined as $
Q_\Omega(s_t,\omega):=\mathbb{E}_{\pi_{\omega}}\left[\sum_{t'=t}^{t+k-1}\gamma^{t'-t}\,r(s_{t'},a_{t'})\;+\;\gamma^{k}\,V_\Omega(s_{t+k})\; | \;s_t,\omega\right]$,
where we use $k$ to denote the random duration at which $\omega$ terminates according to $\beta_\omega$, and $V_\Omega$ is the value of options at the call-and-return state $s_{t+k}$; the expectation is taken jointly over the action sequence and the termination event. While this formulation yields a principled treatment of temporally extended behavior, existing option-based approaches remain architecturally heavy, which poses considerable challenges for practical control tasks~\citep{stolle2002learning,bacon2016optioncriticarchitecture,vezhnevets2017feudal,riemer2019learningabstractoptions}. In this work, we borrow the insight from the option framework, but establish a more effective and simpler value learning framework with flexible horizon length modeling capability.
More discussions of various value learning schemes and hierarchical RL frameworks are provided in Section~\ref{sec:related}.
\section{Modeling Cumulative Return for Adaptive Horizon Length}
\label{sec:grounded_return}
The Bellman updates can accumulate substantial error through recursive bootstrapping on intermediate predicted targets (Eq.~(\ref{equ:1step_q})), especially in long-horizon tasks~\citep{jaakkola1993convergence,mdp,de2018multi,park2025horizonreductionmakesrl,li2025reinforcement,li2025decoupledqchunking}. As discussed in the previous section, multi-step value learning schemes such as $n$-step backups (Eq.~(\ref{equ:nstep_q})) and chunked value function (Eq.~(\ref{equ:chunking_q})) can strengthen supervision by extending the horizon of grounded reward signals, but are limited by fixed-step backups. In the following example, we illustrate the pitfalls of fixed-step value updates and then present a neat solution for horizon-adaptive value stitching.

\begin{wrapfigure}{r}{0.35\textwidth}
    \centering
    \vspace{-10pt}
    \includegraphics[width=0.33\textwidth]{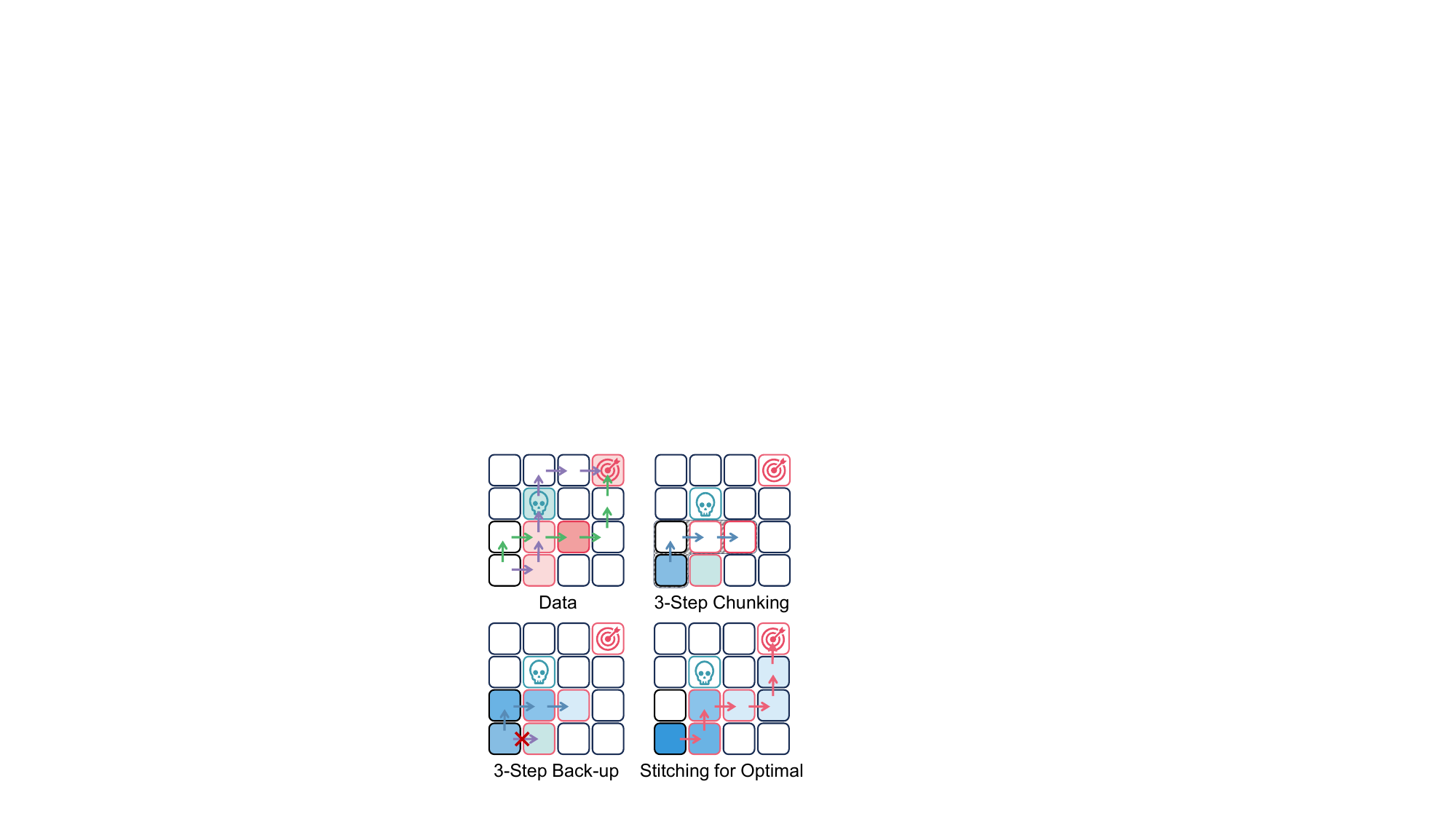} 
    \caption{ \small \textbf{4x4 grid navigation.} Toy example requiring navigation from the bottom-left to the top-right. The data include sub-optimal paths only ({\color[HTML]{4EB66A} \textbf{green path}} and {\color[HTML]{7030A0} \textbf{purple path}}); the environment is specified by \textcolor{lovelyred}{\textbf{dark red}}, \textcolor{tinyred}{\textbf{light red}}, and {\color[HTML]{3D9BAD} \textbf{skull}} blocks, indicating high-, low-, and punitive-reward. The \textcolor{byebyebabeblue}{\textbf{depth of blue in blocks}} indicates the value estimates from fixed-step/chunked value backups, and the optimal stitching value.}
    \label{fig:toycase}
    \vspace{-10pt}
\end{wrapfigure}

\paragraph{The limitation of fixed-horizon modeling.}
Central to the identified problem of fixed-horizon value-learning is that, while it may yield more robust value estimates, it is poorly suited to extracting policies for real-time, stepwise execution. When the backup sequence length is mismatched, crucial single-step signals can be washed out by sequence-level averaging, leading to the loss of precisely the information needed for accurate immediate decisions. 
This intuitive failure mode is readily illustrated by a simple toy example: the $4\times4$ grid navigation task in Figure~\ref{fig:toycase}. The agent starts at the bottom-left corner and aims to reach the goal at the top-right corner by moving only upward or rightward. Rewards for each reach are encoded by cell color shown in the data subgraph: dark red denotes high reward (e.g., 2), light red denotes lower reward (e.g., 1), green skull denotes penalty (e.g., -2), and blank cells have zero reward. The dataset contains only suboptimal trajectories (the green and purple paths).

Under an improper fixed horizon (e.g., the 3-step horizon in this example), the agent fails to recover optimal behavior. The intensity of the blue shading indicates the learned value under each method, with darker regions corresponding to higher values. Chunked value updates favor the first three transitions along the green path, but fail to guide the agent to the destination.
In contrast, $n$-step backups suppress the transition from $(0,0)$ to $(1,0)$ (the first purple transition in data) because it is contaminated by the later negative reward at $(1,2)$; and notably, both fixed-step methods lead to a negative value on $(1,0)$ with a discount factor $\gamma \approx 1$, shown with a skull shading in Figure~\ref{fig:toycase}. However, the optimal solution requires stitching together favorable short segments from different trajectories (red path). Although deliberately simple, this toy example can be viewed as a local subproblem embedded within a much larger maze. It shows that while coupling value updates to a fixed horizon can better stabilize long-horizon learning scenarios, it simultaneously obscures critical short-range decisions, thereby trapping learning in poor local optima. Stability should not be bought at the cost of erasing decisive short-term signals, motivating adaptive horizon modeling that preserves both robustness and flexibility.

\paragraph{Towards adaptive horizon composition.}
The failure of fixed-horizon methods raises a fundamental question: \textit{how should the backup horizon be selected optimally?} Addressing this issue requires moving beyond value estimates rigidly tied to transition sequences in the data buffer. Instead, we need a more general operator that can evaluate the cumulative return between any pair of states under a \emph{dynamically chosen horizon}. This insight motivates the introduction of the following definition:

\begin{definition}[Horizon-Conditioned Cumulative Return]
Let $\tau = (s_0, a_0, r_0, s_1, a_1, r_1, \dots)$ be the trajectory induced by a policy $\pi$ starting at state $s_0 = s$.
For a target state $s^+$ reached exactly at step $k$ (i.e., $s_k = s^+$), we define the horizon-conditioned cumulative return as:
\begin{equation}
G(s, s^+, k) := \mathbb{E}_{\tau \sim \pi}\left[\sum_{t=0}^{k-1} \gamma^t r(s_t, a_t) \;\middle|\; s_0 = s,\; s_k = s^+ \right].
\end{equation}
Given $|r(s,a)| \le R_{\max}$ and $\gamma \in (0,1)$, then $G$ is strictly bounded as $|G(s, s^+, k)| < \frac{R_{\max}}{1-\gamma}$.
\end{definition}

We refer to $G(s, s^+, k)$ as the \textit{Horizon-Conditioned Cumulative Return Function}. By conditioning on both the terminal state $s^+$ and the exact horizon $k$, $G$ defines a value mapping relationship between arbitrary $k$-step reachable state pairs and the policy-induced distribution.
This formulation breaks the dependence of value targets on fixed local backup segments and further supports flexible temporal composition across extended sub-trajectory segments, as shown in Theorem~\ref{thm:compositional}, thereby forming the foundation for adaptive horizon selection in value learning.

\begin{theorem}[Temporal Compositionality]
\label{thm:compositional}
For any target state $s_k$ reachable from $s$ at step $k$, and any intermediate state $s_i$ visited at step $i$ with $0<i<k$ along a trajectory induced by policy $\pi$, the horizon-conditioned cumulative return admits the decomposition:
\begin{equation}
G(s, s_k, k) = \mathbb{E}_{i \sim \mathrm{Unif}\{1,\dots,k-1\},\, s_i \sim \pi}\left[G(s, s_i, i) + \gamma^i G(s_i, s_k, k-i)\right].
\end{equation}
\end{theorem}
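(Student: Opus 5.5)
The plan is to prove the decomposition pointwise for each fixed split index $i$ and then average over the uniform choice of $i$. Since the right-hand side is a uniform average over $i\in\{1,\dots,k-1\}$, it suffices to show that for every such $i$,
\begin{equation*}
G(s, s_k, k) = \mathbb{E}_{s_i \sim \pi(\cdot \mid s_0=s,\, s_k)}\left[G(s, s_i, i) + \gamma^i G(s_i, s_k, k-i)\right].
\end{equation*}
Here $s_i$ is distributed according to the bridge distribution of the intermediate state under $\pi$, conditioned on both endpoints $s_0=s$ and $s_k$; this is how I will read ``$s_i \sim \pi$'' in the statement. The uniform average of identical quantities then returns $G(s,s_k,k)$.

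\textbf{Step 1: splitting the sum.} For a fixed $i$, write
\begin{equation*}
\sum_{t=0}^{k-1}\gamma^t r_t = \sum_{t=0}^{i-1}\gamma^t r_t + \gamma^i\sum_{t'=0}^{k-i-1}\gamma^{t'} r_{i+t'}.
\end{equation*}
Take the expectation conditioned on $s_0=s$ and $s_k$. Linearity turns $G(s,s_k,k)$ into a sum of two conditional expectations. Apply the tower property to each by further conditioning on $s_i$, integrating against the law of $s_i$ given $(s_0,s_k)$.

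\textbf{Step 2: identifying the two pieces.} This is the main step, and it rests on the Markov property of the policy-induced chain $(s_t,a_t)$.
\begin{itemize}
\item For the first piece, conditioned on $s_i$, the segment $(s_0,a_0,\dots,s_{i-1},a_{i-1})$ is independent of $s_k$. Hence $\mathbb{E}[\sum_{t<i}\gamma^t r_t \mid s_0=s, s_i, s_k] = \mathbb{E}[\sum_{t<i}\gamma^t r_t \mid s_0=s, s_i] = G(s,s_i,i)$.
\item For the second piece, conditioned on $s_i$, the future segment is independent of $s_0$. By time-homogeneity (a stationary policy and dynamics), the shifted chain started at $s_i$ and conditioned to hit $s_k$ after $k-i$ steps has the same law as in the definition of $G$. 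This gives $G(s_i,s_k,k-i)$.
\end{itemize}
Here the reward $r_{i-1}=r(s_{i-1},a_{i-1})$ belongs to the first block and $r_i$ to the second. The split therefore occurs exactly at a state, which is what makes the conditional independence clean.

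\textbf{Anticipated obstacle.} The delicate part is measure-theoretic. When $\mathcal{S}$ is continuous, the event $\{s_k=s^+\}$ has probability zero, so the conditioning must be interpreted via regular conditional distributions or transition densities. I would first give the argument with densities $p^\pi_t(s'\mid s)$. Through the Chapman–Kolmogorov identity $p^\pi_k(s_k\mid s)=\int p^\pi_i(s_i\mid s)\,p^\pi_{k-i}(s_k\mid s_i)\,ds_i$, the bridge law of $s_i$ is $p^\pi_i(s_i\mid s)\,p^\pi_{k-i}(s_k\mid s_i)/p^\pi_k(s_k\mid s)$. The factorization of the joint path density at time $i$ then gives the conditional independence used in Step 2 directly. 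Boundedness of $G$ from the Definition ensures all expectations are finite, which justifies applying linearity and the tower property.
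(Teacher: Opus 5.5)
Your proposal is correct and follows the same route as the paper's proof. Both split the discounted sum at step $i$, identify the two conditional pieces as $G(s,s_i,i)$ and $\gamma^i G(s_i,s_k,k-i)$, and note that the uniform average over $i$ is then immediate. Your version is more careful than the paper's, which drops the conditioning on $s_k$ in its first line and never states that $s_i$ must follow the endpoint-conditioned (bridge) law, nor invokes the Markov property and time-homogeneity that justify the identification.
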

The proof is provided in Appendix~\ref{appendix:proof_compostional}.

Theorem~\ref{thm:compositional} formalizes the key principle underlying temporal value composition. It shows that a long-horizon return can be decomposed into a sequence of shorter and composable return segments. This property is particularly important for a constructive function that learns value relationships across multiple temporal scales among reachable states.

\textbf{Learning objective of \(G\)-function.}
To learn a parameterized \(G\)-function \(G_\psi(s, s^+, k)\) from an offline dataset \(\mathcal{D}\), we optimize two complementary objectives: (i) a Monte-Carlo regression objective for direct supervision, and (ii) a temporal compositional objective induced by Theorem~\ref{thm:compositional} to enforce multi-scale consistency:
\begin{align}
\mathcal{L}_{\mathrm{MC}}(\psi) =& \mathbb{E}_{\tau \sim \mathcal{D},\, k \sim \mathrm{Unif}\{1,\dots,K\}}
\bigl[ \bigl( G_\psi(s, s_k, k) - \sum_{t=0}^{k-1}\gamma^{t} r(s_t, a_t) \bigr)^2 \bigr], \label{eql:g_mc} \\
\mathcal{L}_{\mathrm{comp}}(\psi)
=& \mathbb{E}_{\tau \sim \mathcal{D},\, k \sim \mathrm{Unif}\{2,\dots,K\},\, i \sim \mathrm{Unif}\{1,\dots,k-1\}}
\Bigl[ \Bigl( G_\psi(s, s_k, k) \notag\\
& - \bigl( G_\psi(s, s_i, i) + \gamma^{i} G_\psi(s_i, s_k, k-i) \bigr) \Bigr)^2 \Bigr], \label{eql:g_comp}
\end{align}
where trajectory \(\tau\) is sampled uniformly from offline dataset \(\mathcal{D}\), \(k\) is sampled uniformly from \(\{1,\dots,K\}\) with \(K\) denoting the maximum horizon, starting state $s_0=s$ is randomly sampled from the \(\tau\), and \(i\) is sampled uniformly from \(\{1,\dots,k-1\}\). The overall objective is
\begin{equation} \label{eql:g_loss}
\mathcal{L}_{G}(\psi) = \mathcal{L}_{\mathrm{MC}}(\psi) + \lambda \mathcal{L}_{\mathrm{comp}}(\psi).
\end{equation}
The MC objective provides direct return supervision, while the compositional objective regularizes \(G_\psi\) to satisfy temporally consistent value composition, improving generalization across horizons. Once learned, \(G_\psi\) is no longer restricted to fixed backup segments from the offline data buffer, thereby enabling adaptive, target-state-anchored value maximization.
\section{Horizon Adaptive Offline Policy Learning}
\label{sec:rl_vs}
With the \(G\)-function established in Section~\ref{sec:grounded_return}, we now turn to its use in offline policy optimization. To this end, we propose \textcolor{byebyebabeblue}{\textbf{\textit{VAST}}}, which is built upon a novel horizon-adaptive stitching Bellman operator, as well as a hierarchical policy optimization scheme involving a \emph{stitching policy} $\pi_\text{stitch}$ and an \emph{execution policy} $\pi_\text{exec}$. In the following, we describe the details of our algorithm design.

\paragraph{Stitching Bellman operator.}
Note that with the previously defined $G$-function, for a given policy $\pi$, we can equivalently express the state-value function $V^\pi$ as (assuming sufficiently large $K$):
\begin{equation} \label{eql:v_def}
V^\pi(s) = \mathbb{E}_{k \sim \mathrm{Unif}\{1,...,K\}, s_k \sim \Gamma^{\pi}(s, k)} \left[ G(s,s_k,k) + \gamma^k \cdot V^\pi(s_k) \right].
\end{equation}
where $\Gamma^{\pi}(s, k)$ denotes the state distribution induced by rolling out with $\pi$ starting from $s$ for $k$ steps.
And in particular, when $k=1$, the above reduces to the special one-step horizon case,
\begin{equation}
V^\pi(s_t) = \mathbb{E}_{s_{t+1}\sim \mathcal{P}(\cdot|s_t, a_t), a_t\sim\pi} \left[ G(s_t,s_{t+1},1) + \gamma \cdot V^\pi(s_{t+1}) \right],
\end{equation}
which recovers the standard Bellman equation with one-step anchoring (as $G^\pi(s_t,s_{t+1},1)=r(s_t,a_t)$). This observation naturally motivates us to introduce a new \emph{stitching} Bellman operator:
\begin{equation}
\label{eq:stitch_op}
(\mathcal{T}^\pi_{\text{stitching}} V)(s) = \max_{k \in \{1,...,K\}, s_k \sim \Gamma^{\pi}(s, k)}  G(s, s_k, k) + \gamma^k \cdot V(s_k)
\end{equation} 
Repeatedly applying this operator on $V$ will converge to a unique fixed point $V^*$, as it can be proved to be a $\gamma$-contraction mapping:
\begin{theorem}[Contraction Mapping of the Stitching Bellman Operator]
\label{thm:contraction}
For any two bounded value functions $V_1, V_2$, there exists a horizon length $\tilde{k}\sim \{1,\cdots, K\}$ such that the operator $\mathcal{T}_{\text{stitch}}$ is a $\gamma$-contraction in the infinity norm:
\begin{equation}
\|\mathcal{T}_{\text{stitch}} V_1 - \mathcal{T}_{\text{stitch}} V_2\|_\infty \le \gamma^{\tilde{k}} \| V_1 - V_2 \|_\infty \leq \gamma \| V_1 - V_2 \|_\infty.
\end{equation}
\end{theorem}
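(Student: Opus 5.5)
The plan is to prove the contraction pointwise and then take the supremum over states, using the standard fact that a maximum is $1$-Lipschitz: $|\max_x f(x)-\max_x g(x)|\le \max_x|f(x)-g(x)|$. I read the operator in Eq.~(\ref{eq:stitch_op}) as a maximization over pairs $(k,s_k)$, where $k\in\{1,\dots,K\}$ and $s_k$ ranges over the support of $\Gamma^\pi(s,k)$. Since the index set is finite in $k$ and $G$, $V_1$, $V_2$ are bounded, the suprema are finite. If the max over $s_k$ is not attained, I would work with $\varepsilon$-maximizers and let $\varepsilon\to 0$.

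Fix $s$ and write $F_j(k,s_k)=G(s,s_k,k)+\gamma^k V_j(s_k)$ for $j=1,2$. The key observation is that $G$ does not depend on $V$, so it cancels in the difference:
\begin{equation*}
|F_1(k,s_k)-F_2(k,s_k)|=\gamma^k|V_1(s_k)-V_2(s_k)|\le \gamma^k\|V_1-V_2\|_\infty .
\end{equation*}
Suppose first that $(\mathcal{T}_{\text{stitch}}V_1)(s)\ge(\mathcal{T}_{\text{stitch}}V_2)(s)$, and let $(k_1,s_{k_1})$ be a maximizer (or $\varepsilon$-maximizer) for $V_1$. Then
\begin{equation*}
0\le (\mathcal{T}_{\text{stitch}}V_1)(s)-(\mathcal{T}_{\text{stitch}}V_2)(s)\le F_1(k_1,s_{k_1})-F_2(k_1,s_{k_1})\le\gamma^{k_1}\|V_1-V_2\|_\infty .
\end{equation*}
The opposite case is symmetric, using a maximizer $k_2$ for $V_2$.

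This gives a state-dependent horizon $\tilde k(s)\in\{1,\dots,K\}$, namely the maximizing horizon of whichever side is larger, with $|(\mathcal{T}_{\text{stitch}}V_1-\mathcal{T}_{\text{stitch}}V_2)(s)|\le\gamma^{\tilde k(s)}\|V_1-V_2\|_\infty$. Taking the supremum over $s$, let $\tilde k$ be the horizon attained at the worst-case state, or $\tilde k=\min_s\tilde k(s)$ if no worst case exists. Because $\gamma\in(0,1)$, $\gamma^{k}$ is decreasing in $k$, so $\gamma^{\tilde k(s)}\le\gamma^{\tilde k}$ for every $s$ and the first inequality follows. The second inequality follows from $\tilde k\ge 1$, which gives $\gamma^{\tilde k}\le\gamma$.

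Uniqueness and convergence of the fixed point $V^*$ then follow from Banach's fixed-point theorem on the complete space of bounded functions with the sup norm.

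The step I expect to need the most care is the interpretation of the existential $\tilde k$ in the statement. The horizon witnessing the bound varies with $s$ and with which of $V_1$, $V_2$ is larger, so I define $\tilde k$ as the smallest such witness (equivalently, the one with the largest $\gamma^{\tilde k}$). This also makes it clear that the uniform rate is really $\gamma$, coming from $k\ge 1$.

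A second subtlety is that $G$ and $\Gamma^\pi$ are fixed and independent of $V$, which is what lets $G$ cancel. If the max in Eq.~(\ref{eq:stitch_op}) were instead inside an expectation over $s_k\sim\Gamma^\pi$, the same argument applies with the expectation bounded by the sup via Jensen's inequality, so I would remark that either reading works.
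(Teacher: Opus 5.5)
Your proof is correct and follows essentially the same route as the paper's: $G$ cancels, and because the maximum is $1$-Lipschitz the gap is at most $\max_s\max_k\gamma^k|V_1(s_k)-V_2(s_k)|\le\gamma^{\tilde k}\|V_1-V_2\|_\infty\le\gamma\|V_1-V_2\|_\infty$ (using $k\ge 1$), after which Banach's theorem gives the fixed point; your explicit two-case argument on the sign of the difference is cleaner than the paper's step, which uses only the maximizer for $V_1$ and so is valid in one direction only. One small tidy-up: always define $\tilde k=\min_s\tilde k(s)$, which is attained because $\tilde k(s)\in\{1,\dots,K\}$, since your justification ``$\gamma^{\tilde k(s)}\le\gamma^{\tilde k}$ for every $s$'' can fail for the worst-case-state choice (although in that case the bound holds directly at that state).
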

See Appendix~\ref{appendix:proof_contraction} for the proof. With the state-value function $V$ estimated using the stitching Bellman operator, the action-value function $Q$ can be learned via the standard Bellman backup:
\begin{equation} \label{eql:q_loss}
(\mathcal{T}^\pi Q)(s,a) := r(s, a) + \gamma \cdot \mathbb{E}_{s' \sim P(\cdot \mid s)} \left[V(s') \right],
\end{equation}

\paragraph{Value stitching via $\pi_\text{stitch}$.} The stitching Bellman operator offers a new way to bootstrap the value estimates based on the optimal horizon lengths and target states. However, practically achieving such horizon-adaptive value learning in the offline setting requires some additional designs. In this study, we introduce a stitching policy $\pi_\text{stitch}(\cdot|s): \mathcal{S} \rightarrow (\mathbb{Z}^+_{\leq K}, \mathcal{S})$  that simultaneously selects 1) the best horizon $k$ for value stitching and 2) the reward-maximizing sub-goal state $s_k$ based on current state $s$. Based on Eq.(\ref{eq:stitch_op}), the offline learning objective for the $V$-function can be written as:
\begin{equation}
\label{eq:stitch_v_obj}
V = \underset{V}{\arg\min} \mathbb{E}_{s\sim \mathcal{D}} \Big[\Big[\max_{(k,s_k) \sim \pi_\text{stitch}(\cdot|s)}  \underbrace{G(s, s_k, k) + \gamma^k V(s_k)}_{V_{\text{target}}}\Big] -V(s)\Big]^2
\end{equation}
The above value objective adaptively regresses the value target based on the optimal horizon length specified by the $G$-function, thereby achieving horizon-adaptive \textit{value stitching}. However, evaluating the $\max$ operator can be difficult for the continuous state space. Therefore, we leverage the expectile regression objective as introduced in IQL~\citep{Kostrikov2021OfflineRL} to mitigate this difficulty:
\begin{equation} \label{eql:v_loss}
\mathcal{L}_V = \mathbb{E}_{s \sim \mathcal{D}, (k, s_k) \sim \pi_\text{stitch}(\cdot | s)} \left[ L_{2}^{\tau} \left(V_\text{target} - V(s)\right) \right],
\end{equation}
where $ L_{2}^{\tau}(u) = |\tau - \mathbb{I}(u<0)|u^2$, with $\tau > 0.5$ serving as the expectile parameter.

With the value learning objective established, we also need a simple approach to optimize the stitching policy $\pi_\text{stitch}$ while adhering to the data distribution. Specifically, we choose the advantage weighted regression (AWR)~\citep{peng2019advantage} to extract $\pi_\text{stitch}$, which is widely adopted in many offline RL methods~\citep{Kostrikov2021OfflineRL,garg2023extreme,Xu2023OfflineRW,zheng2024safe,liang2026dipole}. It can be perceived as regressing the optimal solution of an RL objective that maximizes the advantage value with a reverse Kullback-Leibler (KL) divergence constraint on the data distribution~\citep{peng2019advantage,Xu2023OfflineRW}. But in our case, we evaluate and maximize over a special \textit{state-advantage} conditioned on both horizon length and sub-goal state, i.e.,
\begin{equation} \label{eql:awr_pi_stitch}
\mathcal{L}_{\pi_\text{stitch}} = \mathbb{E}_{(s,s_k)\sim\mathcal{D} }\left[ \left(\beta \cdot \exp\left(V_\text{target} - V(s)\right) \right) \cdot \log \pi_\text{stitch}(k,s_k|s) \right],
\end{equation}
Moreover, to further enhance the expressiveness and generalization capability of the stitching policy, as well as to improve the robustness of value backups, we parameterize $\pi_\text{stitch}$ as a diffusion/flow model~\citep{sohl2015deep,ho2020denoising,lipman2023flow}. Concretely, $\pi_\text{stitch}$ jointly predicts the optimal horizon $k$ and sub-goal $s_k$ in the continuous space $(\mathbb{R}_{+}, \mathcal{S})$. Let $\epsilon_\text{stitch}$ denote the neural network parameterizing the stitching policy, which is trained to predict the injected noise over the dataset samples. It has been shown and adopted in many diffusion-based RL studies~\citep{zheng2024safe,zheng2025towards,liang2026dipole,ma2025efficient} that the following weighted diffusion loss can be used to optimize the above AWR-style objective:
\begin{equation} \label{eql:weighted_high}
\mathcal{L}_{\epsilon_\text{stitch}}=\mathbb{E}_{t\sim U[0,1],\epsilon\sim\mathcal{N}(\mathbf{0},\mathbf{I}),(s,s_k)\sim\mathcal{D}}\left[ \left(\beta \cdot \exp\left(V_\text{target} - V(s)\right) \right) \cdot \left\Arrowvert \epsilon-\epsilon_\text{stitch}\left(k^t,s^t_k,s,t\right) \right \Arrowvert^2 \right]
\end{equation}
where $(k^t, s^t_k) = \alpha^t (k, s_k) + \sigma^t \epsilon$ with $\alpha^t$ and $\sigma^t$ being predefined noise schedules. Empowered by the stitching policy, \textit{VAST} not only enables effective state-value learning directly from offline data with only state-dependent transitions, but also flexibly extracts globally relevant goal information for better potential downstream utilization.

\paragraph{Optimized action extraction via $\pi_\text{exec}$.} With the optimal sub-goal state predicted from the learned stitching policy $\pi_\text{stitch}$, we can learn a lower-level \emph{execution policy} $\pi_\text{exec}: \mathcal{S} \times \mathcal{S} \rightarrow \mathcal{A}$ for control under its guidance. Due to the unique learning procedure of $\pi_\text{stitch}$, it can provide both reachable and reward-maximizing sub-goals. This is akin to the human decision-making process that performs both coarse global-level planning as well as adaptive short-horizon adjustment. We learn $\pi_\text{exec}(a|s,s^+)$ as a flow policy in a simple goal-conditioned imitation learning manner that directly supervises it with offline data samples, conditioned on both current state $s$ and randomly sampled future goal state $s^+$ in the same data trajectory. This bypasses the out-of-distribution (OOD) issue during training while enjoying a stable learning process.
To further improve the optimality of the output action, similar to previous methods~\citep{fql_park2025, Mao2024DiffusionDICEID}, we filter the sampled actions with test-time rejection sampling conditioned on the sub-goal $s_k\sim \pi_\text{stitch}(\cdot | s)$:
\begin{equation} \label{eq:topk}
a^\star \triangleq {\arg\max}_{a \in \{a^{(1)}, \ldots, a^{(N_\text{rej})} \sim \pi_\text{exec}(\cdot | s, s_k)\}} Q(s,a),
\end{equation}
where $N_\text{rej}$ is the action number to be sampled. $Q$-function is learned by minimizing the standard Bellman error with Eq.(\ref{eql:q_loss}). The above formulation performs implicit $Q$-value maximization by resampling the distribution learned by the expressive goal-conditioned BC flow policy, with instructive and advantageous sub-goal information additionally injected.
See Appendix~\ref{appendix:implementation} for additional implementation details.
\section{Experiments}
\label{sec:experiments}

Briefly, \textit{VAST} employs \emph{an auxiliary value function} to improve \emph{(1) value-learning} and further the \emph{(2) policy-execution optimality and robustness}. Accordingly, we expect \textit{VAST} to achieve strong performance on long-horizon complex planning tasks, outperforming strong value-scaling approaches. 

\paragraph{Environmental setup.} We evaluate \textit{VAST} on the OGBench task suite~\citep{ogbench_park2025}. OGBench is designed for goal-conditioned offline RL, with challenging robotic locomotion and manipulation tasks, including whole-body humanoid control, maze navigation, and object manipulation. Our experiments are conducted in the \texttt{singletask} variant for offline RL, covering 30 state-based tasks across 6 widely used domains, denoted as \textit{Basic}, and 20 state-based tasks across 4 highly challenging domains that increase the planning complexity, denoted as \textit{Hard}, with \textbf{a total of 50 tasks on OGBench}. See Appendix~\ref{appendix:experimental_env} for environmental details.

\paragraph{Baselines.} We select 8 baselines for comprehensive evaluation, which could be categorized into value learning methods with 1-step/multi-step TD, chunking, and generative value:

\begin{itemize}[leftmargin=*]
\item \emph{1-step TD.}\quad We select 1) \textit{IQL}~\citep{Kostrikov2021OfflineRL}: a typical weighted regression offline RL method with Gaussian policy and expectile regression for value learning; 2) \textit{FQL}~\citep{fql_park2025}: a behavior-regularized actor-critic (AC) variant that uses flow policy distillation and shows strong performance on OGBench; 3) \textit{IFQL}~\citep{fql_park2025}: a variant of \textit{IQL} that uses flow policy with rejection sampling during inference.

\item \emph{Multi-step TD.}\quad We select 1) \textit{FQL-n} and 2) \textit{IFQL-n}, the variants of \textit{FQL} and \textit{IFQL} with fixed-n-step value backups. Following~\citet{li2025reinforcement}, we fix the horizon to 5.

\item \emph{Chunking.}\quad We select the most typical chunked value baseline \textit{QC}~\citep{li2025reinforcement}: Q-learning on a fixed-size action sequence space.

\item \emph{Generative value.} To illustrate \textit{VAST}'s effectiveness on value learning and guidance, we further select the baselines that specifically focus on value scaling. This includes 1) \textit{VALUE-FLOWS}~\citep{dongzheng2026value}: a novel distributional RL method that estimates return variance using a flow-matching objective and learns a flow policy; 2) \textit{FLOQ}~\citep{agrawalla2025floq}: it parameterizes the critic as a learned velocity field, estimates the Q-value through numerical integration, and learns a flow policy.
\end{itemize}

See Appendix~\ref{appendix:experimental_baselines} for additional baseline information.


\begin{figure}[t]
\centering

\captionof{table}{\label{table:ogbench_common} \small \textbf{OGBench Results.} We report the aggregate score on all single tasks for each category, averaging over 8 random seeds. \textit{VAST} achieves the best or near-best performance against other baselines across 10 domains, 50 tasks in total. The IQL results are reported from~\citet{fql_park2025}, thereby eliminating the $\pm$ sign.}
\vspace{-5pt}
\resizebox{1\linewidth}{!}{\scriptsize
\begin{tabular}{lccccccccc}
\toprule
& \multicolumn{3}{>{\columncolor[HTML]{EEEEEE}}c}{\textbf{1-step TD}}
& \multicolumn{2}{>{\columncolor[HTML]{EEEEEE}}c}{\textbf{n-step TD}}
& \multicolumn{1}{>{\columncolor[HTML]{EEEEEE}}c}{\textbf{Chunking}}
& \multicolumn{2}{>{\columncolor[HTML]{EEEEEE}}c}{\textbf{Generative}}
& \multicolumn{1}{>{\columncolor[HTML]{EEEEEE}}c}{\textbf{Stitching}} \\
\cmidrule(lr){2-4} \cmidrule(lr){5-6} \cmidrule(lr){7-7} \cmidrule(lr){8-9} \cmidrule(lr){10-10}
(5 tasks each) & IQL & FQL & IFQL & FQL-n & IFQL-n & QC & VALUE-FLOWS & FLOQ & VAST \\
\midrule
\texttt{scene}              & $\textrm{28}$ 
                            & $\textrm{57} \textcolor{lightgray}{\pm \textrm{7}}$ 
                            & $\textrm{46} \textcolor{lightgray}{\pm \textrm{12}}$ 
                            & $\textrm{42} \textcolor{lightgray}{\pm \textrm{8}}$ 
                            & $\textrm{32} \textcolor{lightgray}{\pm \textrm{2}}$ 
                            & \colorbox{tableblue}{$\textrm{60} \textcolor{lightgray}{\pm \textrm{1}}$} 
                            & \colorbox{tableblue}{$\textrm{60} \textcolor{lightgray}{\pm \textrm{5}}$} 
                            & $\textrm{57} \textcolor{lightgray}{\pm \textrm{5}}$ 
                            & \colorbox{tableblue}{$\textrm{60} \textcolor{lightgray}{\pm \textrm{1}}$}
\\
\texttt{cube-single}        & $\textrm{83}$ 
                            & $\textrm{97} \textcolor{lightgray}{\pm \textrm{4}}$
                            & $\textrm{87} \textcolor{lightgray}{\pm \textrm{5}}$ 
                            & $\textrm{97} \textcolor{lightgray}{\pm \textrm{4}}$
                            & $\textrm{82} \textcolor{lightgray}{\pm \textrm{6}}$ 
                            & $\textrm{93} \textcolor{lightgray}{\pm \textrm{5}}$ 
                            & $\textrm{95} \textcolor{lightgray}{\pm \textrm{4}}$ 
                            & $\textrm{96} \textcolor{lightgray}{\pm \textrm{5}}$ 
                            & \colorbox{tableblue}{$\textrm{98} \textcolor{lightgray}{\pm \textrm{3}}$} 
\\
\texttt{cube-double}        & $\textrm{7}$ 
                            & $\textrm{30} \textcolor{lightgray}{\pm \textrm{8}}$ 
                            & $\textrm{12} \textcolor{lightgray}{\pm \textrm{5}}$ 
                            & $\textrm{4} \textcolor{lightgray}{\pm \textrm{3}}$ 
                            & $\textrm{4} \textcolor{lightgray}{\pm \textrm{3}}$ 
                            & $\textrm{67} \textcolor{lightgray}{\pm \textrm{7}}$
                            & $\textrm{61} \textcolor{lightgray}{\pm \textrm{11}}$ 
                            & $\textrm{49} \textcolor{lightgray}{\pm \textrm{16}}$ 
                            & \colorbox{tableblue}{$\textrm{73} \textcolor{lightgray}{\pm \textrm{5}}$} 
\\
\texttt{puzzle-4x4}         & $\textrm{7}$ 
                            & $\textrm{15} \textcolor{lightgray}{\pm \textrm{5}}$ 
                            & $\textrm{27} \textcolor{lightgray}{\pm \textrm{8}}$ 
                            & $\textrm{27} \textcolor{lightgray}{\pm \textrm{6}}$ 
                            & $\textrm{12} \textcolor{lightgray}{\pm \textrm{4}}$ 
                            & $\textrm{29} \textcolor{lightgray}{\pm \textrm{10}}$ 
                            & $\textrm{21} \textcolor{lightgray}{\pm \textrm{8}}$ 
                            & $\textrm{31} \textcolor{lightgray}{\pm \textrm{9}}$ 
                            & \colorbox{tableblue}{$\textrm{44} \textcolor{lightgray}{\pm \textrm{10}}$} 
\\
\texttt{antmaze-large}      & $\textrm{53}$ 
                            & $\textrm{82} \textcolor{lightgray}{\pm \textrm{7}}$ 
                            & $\textrm{32} \textcolor{lightgray}{\pm \textrm{17}}$ 
                            & $\textrm{80} \textcolor{lightgray}{\pm \textrm{9}}$ 
                            & $\textrm{27} \textcolor{lightgray}{\pm \textrm{7}}$ 
                            & $\textrm{6} \textcolor{lightgray}{\pm \textrm{5}}$ 
                            & $\textrm{48} \textcolor{lightgray}{\pm \textrm{15}}$ 
                            & \colorbox{tableblue}{$\textrm{93} \textcolor{lightgray}{\pm \textrm{6}}$} 
                            & $\textrm{79} \textcolor{lightgray}{\pm \textrm{7}}$ 
\\
\texttt{antmaze-giant}      & $\textrm{4}$ 
                            & $\textrm{11} \textcolor{lightgray}{\pm \textrm{16}}$ 
                            & $\textrm{0} \textcolor{lightgray}{\pm \textrm{2}}$ 
                            & \colorbox{tableblue}{$\textrm{37} \textcolor{lightgray}{\pm \textrm{16}}$} 
                            & $\textrm{0} \textcolor{lightgray}{\pm \textrm{0}}$ 
                            & $\textrm{0} \textcolor{lightgray}{\pm \textrm{0}}$ 
                            & $\textrm{3} \textcolor{lightgray}{\pm \textrm{3}}$ 
                            & $\textrm{29} \textcolor{lightgray}{\pm \textrm{25}}$ 
                            & $\textrm{19} \textcolor{lightgray}{\pm \textrm{9}}$ 
\\
\cmidrule(lr){2-10}
Overall (\textit{Basic})    & $\textrm{30}$ 
                            & $\textrm{49} \textcolor{lightgray}{\pm \textrm{9}}$ 
                            & $\textrm{34} \textcolor{lightgray}{\pm \textrm{10}}$ 
                            & $\textrm{48} \textcolor{lightgray}{\pm \textrm{9}}$ 
                            & $\textrm{26} \textcolor{lightgray}{\pm \textrm{4}}$ 
                            & $\textrm{43} \textcolor{lightgray}{\pm \textrm{6}}$ 
                            & $\textrm{48} \textcolor{lightgray}{\pm \textrm{9}}$ 
                            & $\textrm{59} \textcolor{lightgray}{\pm \textrm{13}}$ 
                            & \colorbox{tableblue}{$\textrm{62} \textcolor{lightgray}{\pm \textrm{7}}$} 
\\
\midrule
\texttt{cube-triple}        & $\textrm{-}$ 
                            & $\textrm{1} \textcolor{lightgray}{\pm \textrm{4}}$ 
                            & $\textrm{1} \textcolor{lightgray}{\pm \textrm{1}}$ 
                            & $\textrm{0} \textcolor{lightgray}{\pm \textrm{0}}$ 
                            & $\textrm{0} \textcolor{lightgray}{\pm \textrm{0}}$ 
                            & $\textrm{4} \textcolor{lightgray}{\pm \textrm{5}}$ 
                            & $\textrm{13} \textcolor{lightgray}{\pm \textrm{10}}$ 
                            & $\textrm{4} \textcolor{lightgray}{\pm \textrm{5}}$ 
                            & \colorbox{tableblue}{$\textrm{36} \textcolor{lightgray}{\pm \textrm{8}}$}
                                        \\
\texttt{cube-quadruple}     & $\textrm{-}$ 
                            & $\textrm{0} \textcolor{lightgray}{\pm \textrm{0}}$ 
                            & $\textrm{0} \textcolor{lightgray}{\pm \textrm{0}}$ 
                            & $\textrm{0} \textcolor{lightgray}{\pm \textrm{0}}$ 
                            & $\textrm{0} \textcolor{lightgray}{\pm \textrm{0}}$ 
                            & $\textrm{0} \textcolor{lightgray}{\pm \textrm{0}}$ 
                            & $\textrm{0} \textcolor{lightgray}{\pm \textrm{0}}$ 
                            & $\textrm{0} \textcolor{lightgray}{\pm \textrm{0}}$ 
                            & \colorbox{tableblue}{$\textrm{2} \textcolor{lightgray}{\pm \textrm{4}}$}
                                        \\
\texttt{puzzle-4x5}         & $\textrm{-}$ 
                            & $\textrm{15} \textcolor{lightgray}{\pm \textrm{3}}$ 
                            & $\textrm{16} \textcolor{lightgray}{\pm \textrm{7}}$ 
                            & $\textrm{18} \textcolor{lightgray}{\pm \textrm{1}}$ 
                            & $\textrm{14} \textcolor{lightgray}{\pm \textrm{4}}$ 
                            & $\textrm{20} \textcolor{lightgray}{\pm \textrm{0}}$
                            & $\textrm{12} \textcolor{lightgray}{\pm \textrm{4}}$ 
                            & $\textrm{13} \textcolor{lightgray}{\pm \textrm{3}}$ 
                            & \colorbox{tableblue}{$\textrm{21} \textcolor{lightgray}{\pm \textrm{2}}$}
                                        \\
\texttt{puzzle-4x6}         & $\textrm{-}$  
                            & $\textrm{4} \textcolor{lightgray}{\pm \textrm{5}}$ 
                            & $\textrm{0} \textcolor{lightgray}{\pm \textrm{0}}$
                            & $\textrm{4} \textcolor{lightgray}{\pm \textrm{9}}$
                            & $\textrm{3} \textcolor{lightgray}{\pm \textrm{5}}$ 
                            & $\textrm{4} \textcolor{lightgray}{\pm \textrm{4}}$ 
                            & $\textrm{3} \textcolor{lightgray}{\pm \textrm{6}}$ 
                            & $\textrm{3} \textcolor{lightgray}{\pm \textrm{3}}$ 
                            & \colorbox{tableblue}{$\textrm{5} \textcolor{lightgray}{\pm \textrm{10}}$}
\\
\cmidrule(lr){2-10}
Overall (\textit{Hard})     & $\textrm{-}$ 
                            & $\textrm{5} \textcolor{lightgray}{\pm \textrm{4}}$ 
                            & $\textrm{4} \textcolor{lightgray}{\pm \textrm{4}}$ 
                            & $\textrm{6} \textcolor{lightgray}{\pm \textrm{5}}$ 
                            & $\textrm{4} \textcolor{lightgray}{\pm \textrm{3}}$ 
                            & $\textrm{7} \textcolor{lightgray}{\pm \textrm{3}}$ 
                            & $\textrm{7} \textcolor{lightgray}{\pm \textrm{6}}$ 
                            & $\textrm{5} \textcolor{lightgray}{\pm \textrm{3}}$ 
                            & \colorbox{tableblue}{$\textrm{16} \textcolor{lightgray}{\pm \textrm{7}}$} 
\\
\bottomrule
\end{tabular}}
\end{figure}


\begin{figure}[t]
\centering
\includegraphics[width=0.62\textwidth]{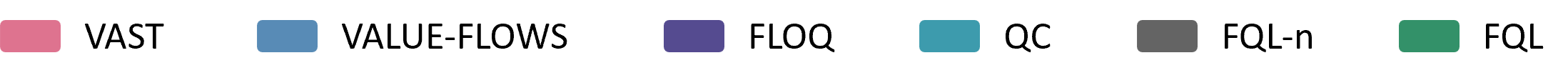}

\begin{center}
\includegraphics[width=1\textwidth]{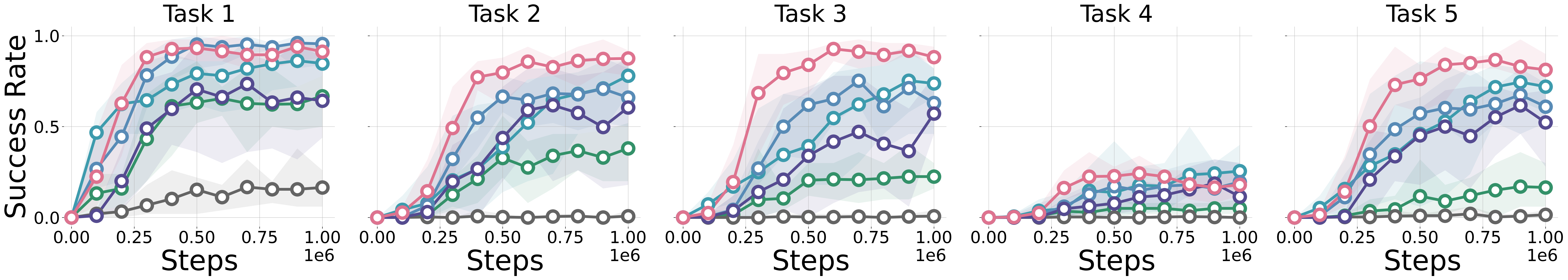}
\includegraphics[width=1\textwidth]{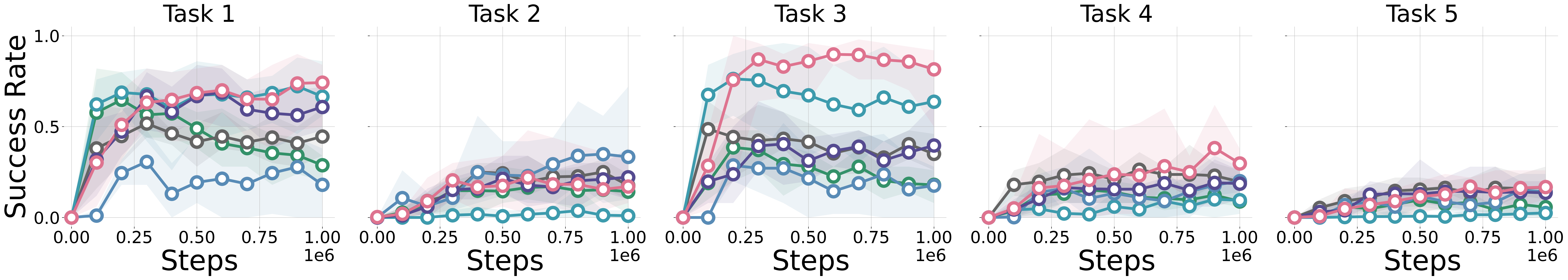}
\includegraphics[width=1\textwidth]{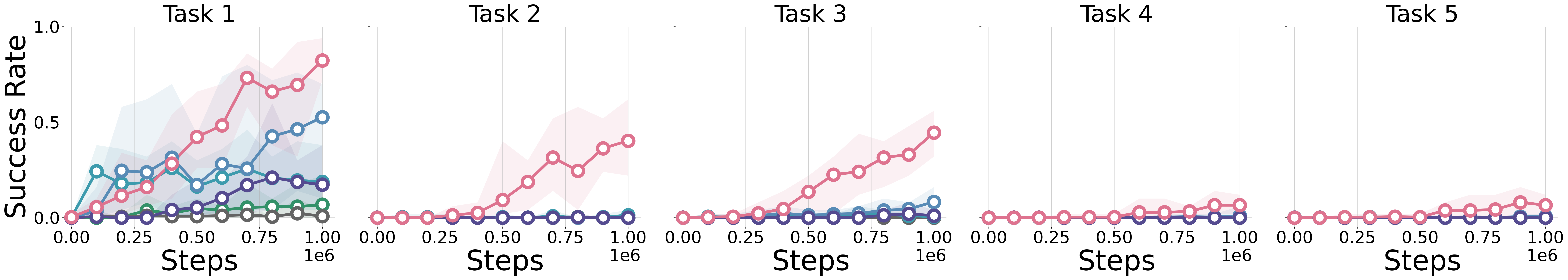}
\vspace{-13pt}
\captionof{figure}{\label{fig:curve_main} \small \textbf{Learning curves.} (\textit{Top}): \texttt{cube-double-play}. (\textit{Middle}): \texttt{puzzle-4x4-play}. (\textit{Bottom}): \texttt{cube-triple-play}.}
\end{center}
\end{figure}

\begin{figure}[t]
\includegraphics[width=\linewidth]{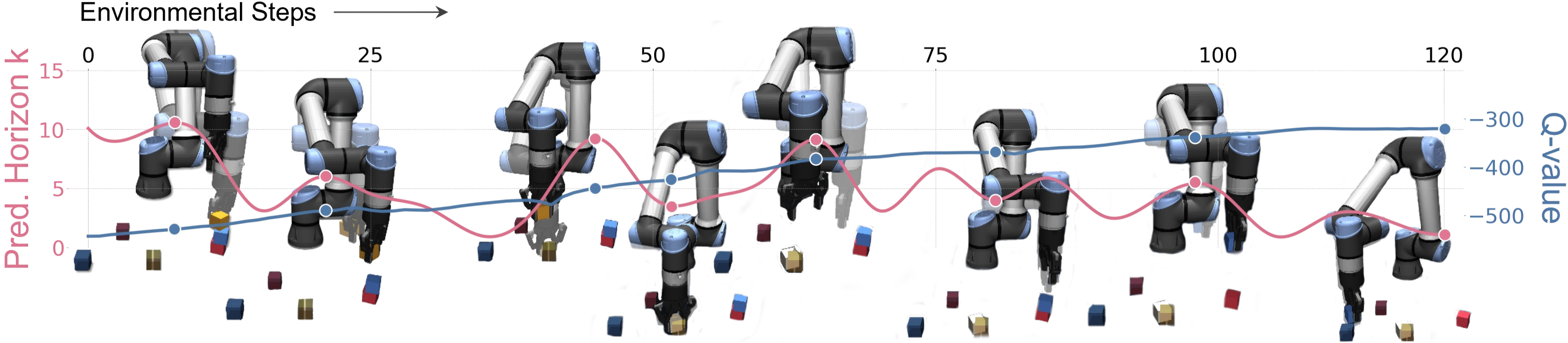}
\vspace{-10pt}
\captionof{figure}{\label{fig:visualization} \textbf{Visualization on \texttt{cube-triple-play-task3} (pop\_from\_stuck).} The solid robot arm shows the current pose, while the transparent arm shows the sub-goal.}

\end{figure}

\begin{figure}[t]
    \centering
    \begin{minipage}[c]{0.19\linewidth}
        \includegraphics[width=0.99\linewidth]{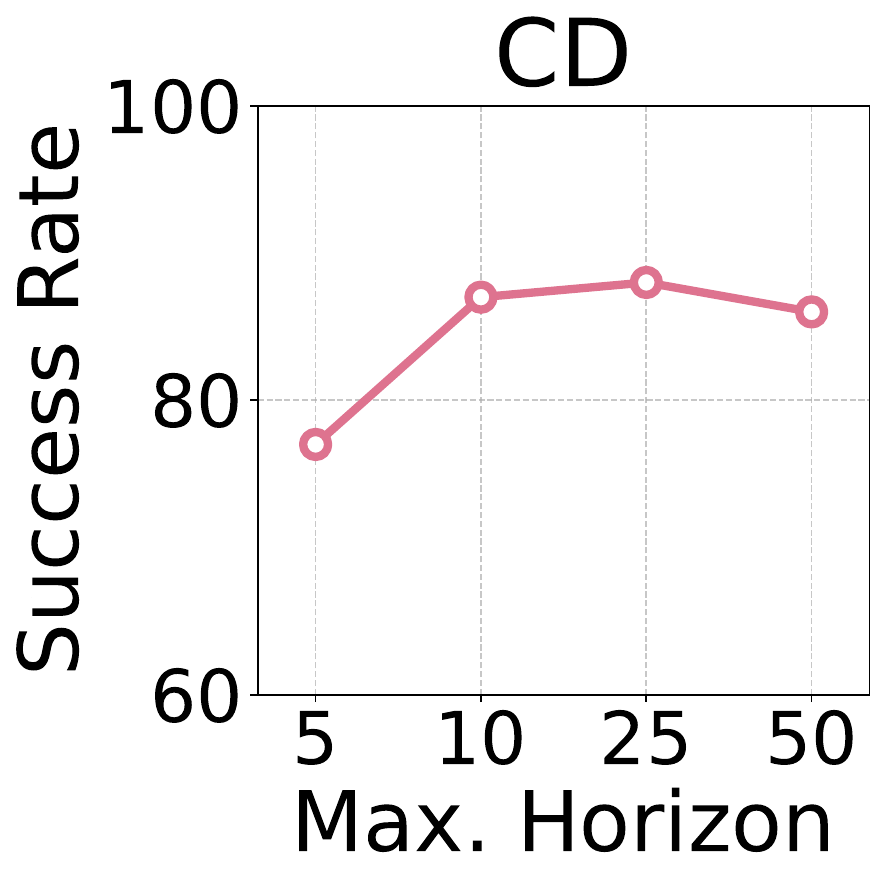}
    \end{minipage}
    \hfill
    \begin{minipage}[c]{0.19\linewidth}
        \includegraphics[width=0.99\linewidth]{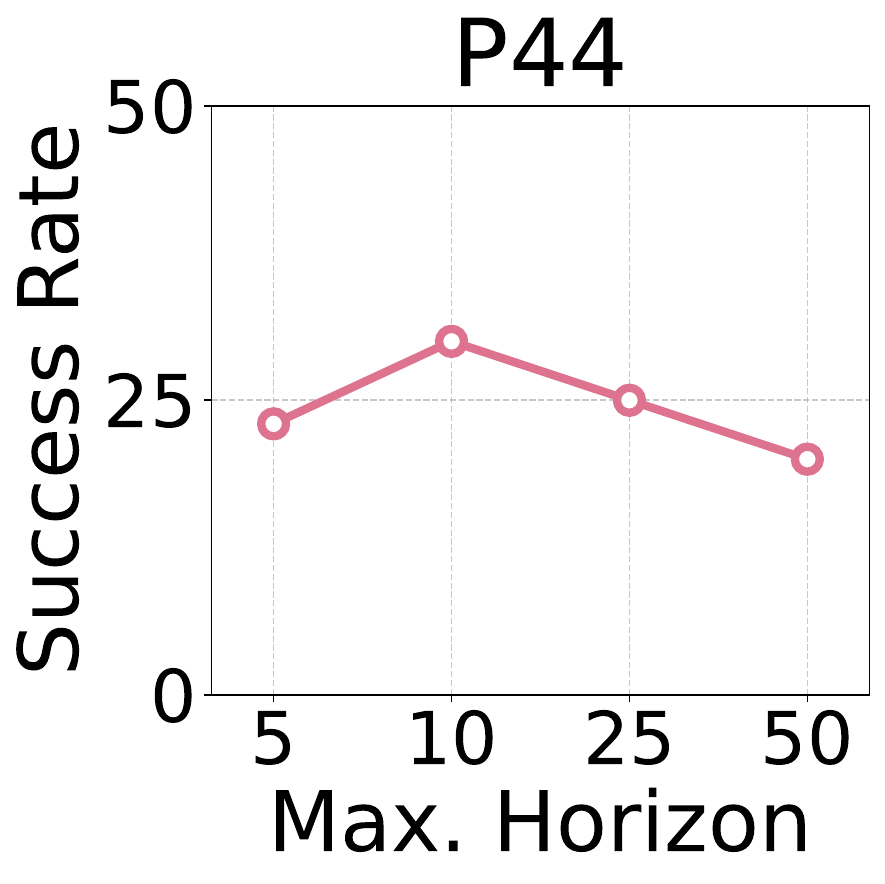}
    \end{minipage}
    \hfill
    \begin{minipage}[c]{0.19\linewidth}
        \includegraphics[width=0.99\linewidth]{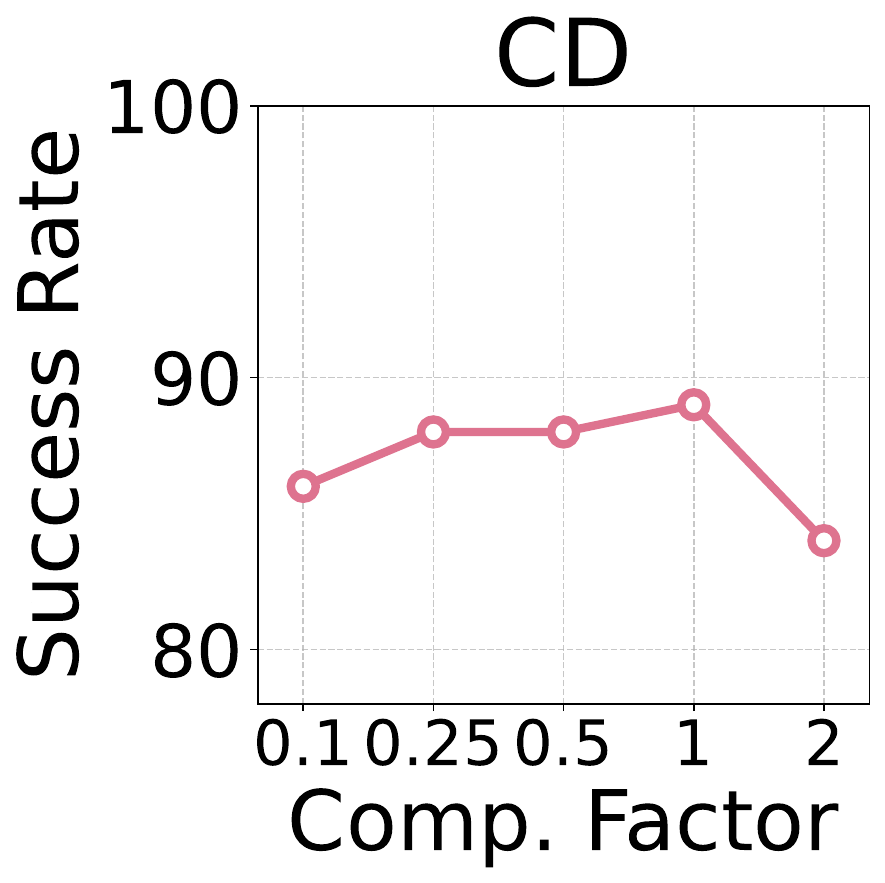}
    \end{minipage}
    \hfill
    \begin{minipage}[c]{0.19\linewidth}
        \includegraphics[width=0.99\linewidth]{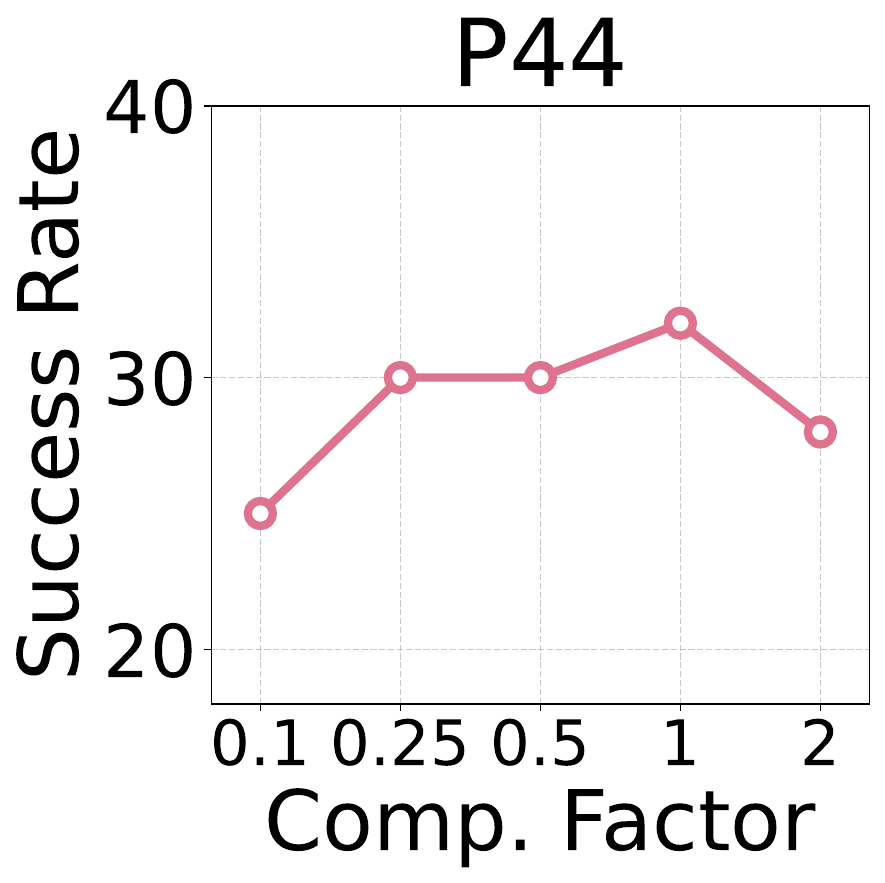}
    \end{minipage}
    \hfill
    \begin{minipage}[c]{0.19\linewidth}
        \includegraphics[width=0.99\linewidth]{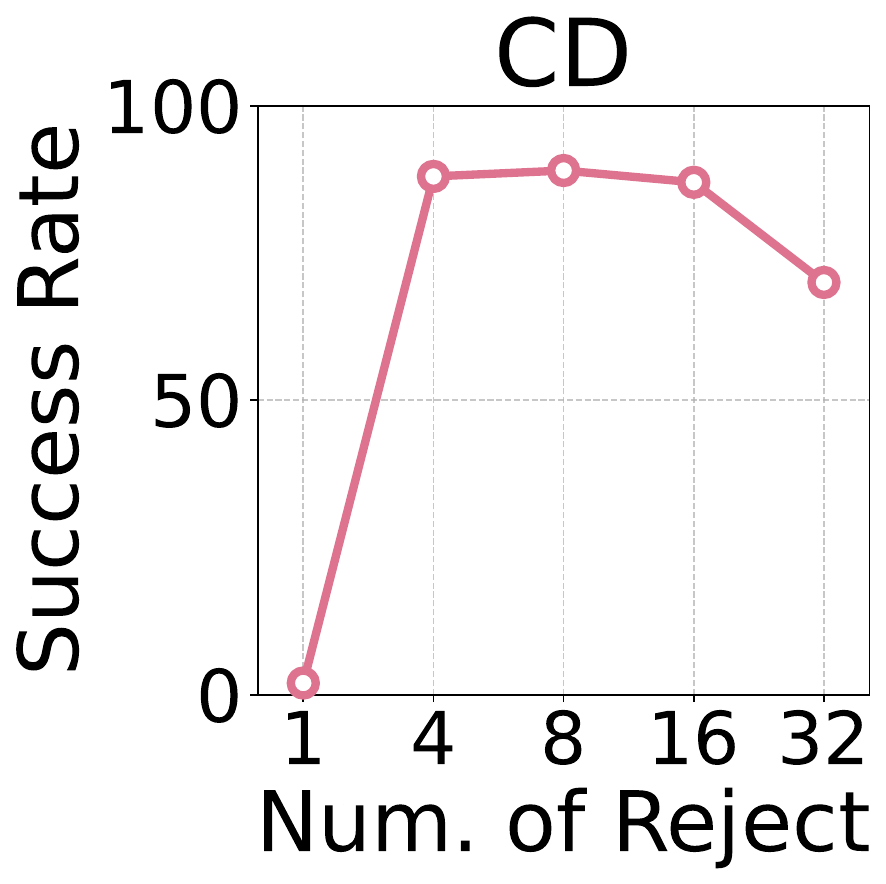}
    \end{minipage}
    \vspace{-5pt}
    \caption{\label{fig:abl} \small \textbf{Ablations on important hyperparameter}. We provide the horizon length ablation (\emph{left 1-2}), the compositional factor ablation (\emph{left 3-4}), and the rejection sampling strength ablation (\emph{left 5}) using default tasks on domain \texttt{cube-double-play} and \texttt{puzzle-4x4-play}.}
    \vspace{-10pt}
\end{figure}

\paragraph{Experimental results and ablations.} We evaluate each method with a fixed number of offline updates (1M) and report the success rate. We average the scores across 8 random seeds, with $\pm$ indicating standard deviations in the tables. We analyze the experiments with the following questions:

\textit{How does VAST perform?}\quad
Table~\ref{table:ogbench_common} summarizes the aggregate results on OGbench. \textit{VAST} achieves the best or near-best performance in both domain-wise averages and the overall score, with particularly pronounced gains on manipulation domains such as \texttt{scene}, \texttt{cube}, and \texttt{puzzle}. Notably, \textit{VAST} is the strongest method across all \textit{Hard} domains, achieving an overall score that is $2.5\times$ higher than the best baseline. This demonstrates its ability to effectively exploit dataset transitions and scale to long-horizon decision-making problems. Figure~\ref{fig:curve_main} presents the offline learning curves. \textit{VAST} consistently outperforms competing methods on most tasks, and its sustained upward trend in the \textit{Hard} setting further highlights its scalability. To better understand its behavior, we visualize the ground-truth trajectory together with the future states predicted by \textit{VAST} on \texttt{cube-triple-play-task3} in Figure~\ref{fig:visualization}. The predicted horizon follows a plausible sequence of phased atomic intentions, suggesting that the policy selects an effective intermediate target $k, s_k \sim \pi_\text{stitch}$ and thereby progresses toward stable, value-maximizing actions. Full per-task results are provided in Appendix~\ref{appendix:fullresults_table}.

\textit{Does the horizon-length limit affect VAST?}\quad
Figure~\ref{fig:abl} (\emph{left 1-2}) compares the performance of \textit{VAST} under different horizon limits $K$ in value learning (Eq.~(\ref{eql:g_mc}-\ref{eql:g_comp}), Eq.~(\ref{eql:v_def})). Overall, performance improves as $K$ increases from 5 to 10, indicating that longer horizons provide greater flexibility for selecting informative targets during learning. Intuitively, a larger $K$ expands the candidate set of future targets and therefore increases the chance of identifying a high-value intermediate target $(k, s_k)$. In most manipulation domains, \textit{VAST} achieves its best performance around $[10,25]$, strongly relying on the task setting. Further increasing $K$ leads to a mild performance drop, suggesting that excessively long horizons may make target estimation more challenging and place higher demands on the expressivity of the $G$-function.

\textit{How important is the compositional objective?}\quad Figure~\ref{fig:abl} (\emph{left 3-4}) reports the performance of \textit{VAST} under different compositional weights $\lambda$ (Eq.~\ref{eql:g_loss})). Performance improves as $\lambda$ increases within an appropriate range, demonstrating its benefit. While the MC objective (Eq.~(\ref{eql:g_mc})) provides direct supervision from dataset transitions, the compositional objective further encourages temporal consistency of the learned $G$-function and improves its generalization beyond observed trajectories. As a result, it supplies more reliable value estimates for downstream policy optimization. When $\lambda$ becomes too large, performance decreases, suggesting that the compositional term should be balanced with the MC objective; overly strong self-supervised regularization may dominate the learning signal and lead to less informative representations, such as degenerate fixed-point solutions.

\textit{Is two-level reward maximization necessary?}\quad
We compare different rejection sampling strengths for $\pi_\text{exec}$ in Figure~\ref{fig:abl} (\emph{left 5}). When $N_\text{rej}=1$, the framework effectively relies on $\pi_\text{stitch}$ alone, where reward maximization is mainly achieved through goal selection and the goal-conditioned BC policy directly executes the selected goal. Increasing $N_\text{rej}$ improves performance, indicating that additional reward-aware selection at the execution level is a useful complement coupled with high-level goal selection. These results validate the two-level reward-maximization design of \textit{VAST}.

\vspace{-2pt}
\section{Related Works}
\label{sec:related}

\noindent\textbf{Offline RL.}\quad Offline RL aims to learn an optimal policy from the given dataset without environmental interaction. A central challenge is distributional shift, that value functions may assign overly optimistic estimates to out-of-distribution (OOD) actions, leading to poor policy improvement~\citep{Levine2020OfflineRL}. Early methods mainly address this issue from two perspectives. The first line constrains the learned policy to stay close to the behavior policy~\citep{Fujimoto2018OffPolicyDR,kumar2019stabilizing,fujimoto2021minimalist,li2023proto,li2023when,cheng2023look}. The second line directly regularizes the value function by penalizing overestimated OOD queries~\citep{kummar2020cql,Kostrikov2021fishercritic,niu2022when,yang2022rorl,xu2022constraints}. More recent studies suggest that, beyond conservative value estimation, effective policy extraction is often the key factor that determines final performance~\citep{park2024valuelearningreallymain}. This insight has motivated lightweight in-sample learning methods~\citep{Kostrikov2021OfflineRL,Xu2023OfflineRW,xu2022a,wang2023offline}, which are stable, simple, and competitive in absolute performance. In parallel, generative models such as diffusion~\citep{diffusion2015Sohl,ho2020DDPM,score2023songyang} and flow-based models~\citep{lipman2024flowmatchingguidecode} have been introduced to improve policy expressivity and generalization~\citep{Wang2022DiffusionPA,idql,Chen2022OfflineRL,CEP,Mao2024DiffusionDICEID,zheng2024safe,zheng2025towards,liang2026dipole,frans2025diffusionguidancecontrollablepolicy,espinosa2025scaling,li2026qam}. Together, these developments have made offline RL increasingly practical for real-world applications~\citep{intelligence2025pi06vlalearnsexperience,zheng2026unleash}.

\noindent\textbf{Value learning in RL.}\quad
Value learning is central to RL, as it determines the performance ceiling of the learned policy; meanwhile, it remains a key bottleneck for scaling RL to complex and long-horizon problems~\citep{park2025horizonreductionmakesrl}. Existing efforts to improve value learning can be broadly grouped into three directions: (i) stabilizing and strengthening value estimation through multi-step reward propagation, which includes multi-step TD backups in the action space~\citep{konidaris2011td_gamma,park2025horizonreductionmakesrl}, as well as methods that couple value estimates over fixed-length action or trajectory chunks~\citep{seo2024coarse,tian2025chunking,li2025reinforcement,li2025decoupledqchunking}; (ii) distributional approaches model the full return distribution rather than only its expectation, either explicitly learn a discrete return distribution with projection operators~\citep{bellemare2017distributional,Stopfarebrother24a,ma2025dsac}, or implicitly represent return distributions using expressive generative models~\citep{agrawalla2025floq,dongzheng2026value}, including diffusion~\citep{diffusion2015Sohl,ho2020DDPM,score2023songyang} and flow matching~\citep{lipman2023flow,lipman2024flowmatchingguidecode} models; (iii) scaling the parameterization of value functions with expressive architectures, such as Transformers and Mixtures of Experts (MoEs)~\citep{chebotar2023q,pac-springenberg24a,ceron24b}.

\noindent\textbf{Hierarchical and option in RL.}\quad
The option framework was introduced to improve the scalability of RL by abstracting temporal dynamics into reusable high-level behaviors~\citep{SUTTON1999181,precup2000temporal,stolle2002learning,bacon2016optioncriticarchitecture,riemer2019learningabstractoptions}. In this framework, an option is typically defined by an initiation condition, a termination condition, and an intra-option policy. This formulation enables agents to reason and act over extended time scales, and has naturally led to a broad line of hierarchical RL (HRL) methods~\citep{stolle2002learning,bacon2016optioncriticarchitecture,vezhnevets2017feudal,riemer2019learningabstractoptions,machado2023temporal,dayan1992feudal,dietterich2000hierarchical,vezhnevets2017feudal,vezhnevets2016strategic,hiql,pateria2021hierarchical,kulkarni2016hierarchical,park2025horizonreductionmakesrl,xu2022a}. Despite this progress, HRL still faces a key optimization challenge lies in non-stationary learning, where the high-level policy must optimize over goals or options whose effects keep changing as the low-level policy is updated~\citep{nachum2018data}.
\vspace{-2pt}
\section{Conclusion and Future}
\label{sec:conclusion}

In this work, we introduced \textit{VAST}, a novel RL method that decouples value learning from fixed-step backups via an auxiliary value function that enables "stitching" operation. \textit{VAST} defines a stitching Bellman operator and employs two-level policies to maximize reward. Its excellent performance on OGBench suggests a promising direction for scalable long-horizon decision-making with horizon-aware future identification. Looking forward, the core of \textit{VAST} could be naturally combined with several diffusion/flow policy optimization methods, model-based methods, and representation learning methods, offering an exciting avenue for further improvement of horizon-adaptive RL. See Appendix~\ref{appendix:limitation_future} for more discussion on limitations and future work.

\section*{Acknowledgment}
This work was supported by the Wuxi Research Institute of Applied Technologies at Tsinghua University under the Grant 20242001120, the Xiongan AI Institute, the University of Hong Kong Faculty Interdisciplinary Fund, and the University of Hong Kong Urban Systems Institute (HKU-USI) Fellowship Grant.

\bibliography{reference}
\bibliographystyle{unsrtnat}

\newpage
\appendix
\section{LLM Usage}
\label{appendix:llm}

In this paper, we employed Large Language Models (LLMs) solely for polishing the writing. No parts of the technical content, experimental results, or conclusions were generated by LLMs.
\section{Theoretical Proofs}
\label{appendix:proof}

\subsection{Proof of Theorem \ref{thm:compositional} (Temporal Compositionality)}
\label{appendix:proof_compostional}

\textbf{Proof.}
By definition, $G(s, s_k, k) = \mathbb{E}_{\pi}\left[ \sum_{t=0}^{k-1} \gamma^t r_t \mid s_0 = s \right]$, where $k$ denotes the horizon between states $s$ and $s'$ .
For any intermediate state $s_i$ reached at time $i < k$, we can partition the summation:
\begin{align}
G(s, s_k, k) &= \mathbb{E}_{\pi}\left[ \sum_{t=0}^{i-1} \gamma^t r_t + \sum_{t=i}^{k-1} \gamma^t r_t \;\middle|\; s_0 = s,(s_k, k) \right] \\
&= \mathbb{E}_{i \sim \mathrm{Unif}\{1,...,k-1\}, s_i \sim \pi} \left( \mathbb{E}_{\pi}\left[ \sum_{t=0}^{i-1} \gamma^t r_t \;\middle|\; s_0=s, (s_i, i) \right] \right. \\
&\qquad\qquad\qquad \left. + \, \mathbb{E}_{\pi}\left[ \gamma^{i} \sum_{t=0}^{k-i-1} \gamma^t r_{t+i} \;\middle|\; s_{0} = s_i, (s_k, k-i) \right] \right) \\
&= \mathbb{E}_{i \sim \mathrm{Unif}\{1,...,k-1\}, s_i \sim \pi} \left[G(s, s_i, i) + \gamma^{i} G(s_i, s_k, k-i)\right],
\end{align}
where $\pi$ denotes the behavioral distribution.
Under the function approximation $\hat{G}$, let the error at each compositional step be bounded by $\epsilon_{\text{comp}}$. For a trajectory composed of $M$ segments $(s_0, s_1, \dots, s_M)$, the approximated return is equivalent to $\sum_{t=0}^{M-1} \gamma^{t} \hat{G}(s_t, s_{t+1}, 1)$, which implies the accumulated error is at most $\sum_{t=0}^{M-1} \gamma^{t} \epsilon_{\text{comp}} \le M \epsilon_{\text{comp}}$, under a linear growth w.r.t segments. \hfill $\blacksquare$

\subsection{Proof of Theorem \ref{thm:contraction} (Contraction Mapping)}
\label{appendix:proof_contraction}
\textbf{Proof.}
We evaluate the supremum norm between the operator applied to two arbitrary bounded value functions $V_1$ and $V_2$:
\begin{equation}
\begin{aligned}
\|\mathcal{T}_{\text{stitch}} V_1 - \mathcal{T}_{\text{stitch}} V_2\|_\infty &= \max_{s} \left| \max_{k_1 \in \{1,...,K\}} \big[ G(s, s_{k_1}, k_1) + \gamma^{k_1} V_1(s_{k_1}) \big] \right. \\
&\qquad\qquad\qquad\qquad\qquad\qquad \left. - \max_{k_2 \in \{1,...,K\}} \big[ G(s, s_{k_2}, k_2) + \gamma^{k_2} V_2(s_{k_2}) \big] \right| \\
&\le \max_{s} \left| \max_{k_1 \in \{1,...,K\}} \big[ G(s, s_{k_1}, k_1) + \gamma^{k_1} V_1(s_{k_1}) \big] -  \big( G(s, s_{k_1}, k_1) + \gamma^{k_1} V_2(s_{k_1}) \big) \right| \\
&\le \max_{s} \max_{k \in \{1,...,K\}} \left| \big( G(s, s_k, k) + \gamma^k V_1(s_k) \big) - \big( G(s, s_k, k) + \gamma^k V_2(s_k) \big) \right| \\
&= \max_{s} \max_{k \in \{1,...,K\}} \gamma^k \left| V_1(s_k) - V_2(s_k) \right| \\
&\le \gamma^{\tilde{k}} \| V_1 - V_2 \|_\infty \\
&\leq \gamma \| V_1 - V_2 \|_\infty.
\end{aligned}
\end{equation}

The last inequality holds as any valid temporal transition requires at least one environment step, $\tilde{k} \ge 1$, which implies $\gamma^{\tilde{k}} \le \gamma < 1$, proving that $\mathcal{T}_{\text{stitch}}$ is a strict $\gamma$-contraction. By the Banach Fixed-Point Theorem, repeated application of $\mathcal{T}_{\text{stitch}}$ converges to a unique fixed point $V^*$. \hfill $\blacksquare$
\section{Experimental Setup}
\label{appendix:experimental}

\subsection{Environment}
\label{appendix:experimental_env}

\begin{figure}[t]
    \centering
    \includegraphics[width=0.72\linewidth]{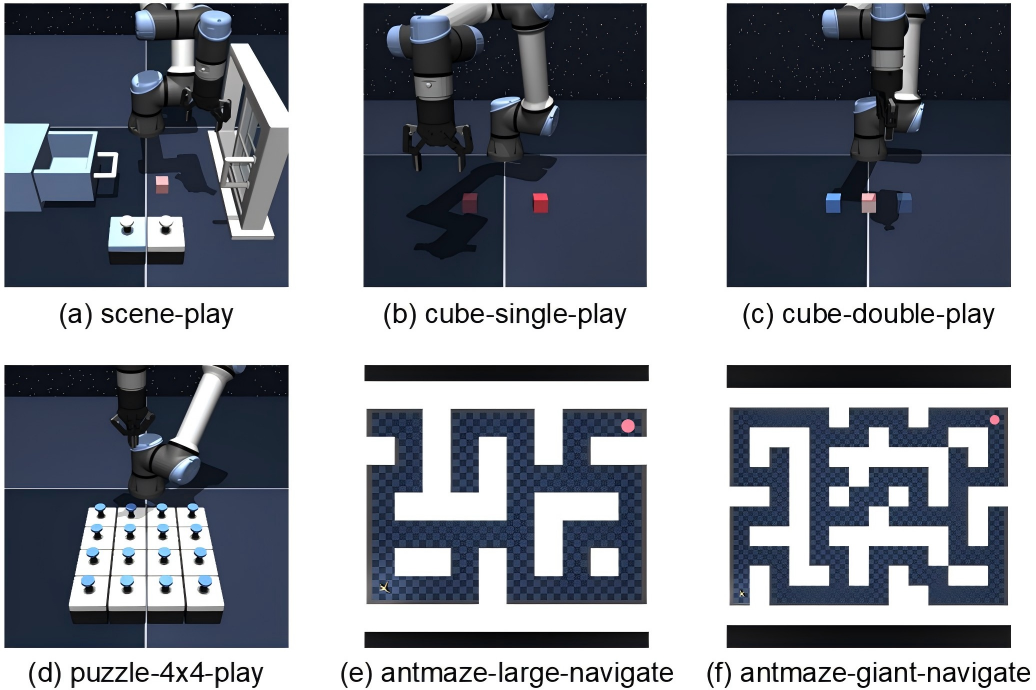}
    \caption{\textbf{OGBench Basic.} We experiment on 6 majorly used domains on OGBench~\citep{ogbench_park2025} in singletask variant, shown as (a)-(f). Each domain contains 5 tasks.}
    \label{fig:ogbench_env_common}
    
    \vspace{1.5em}
    
    \includegraphics[width=0.98\linewidth]{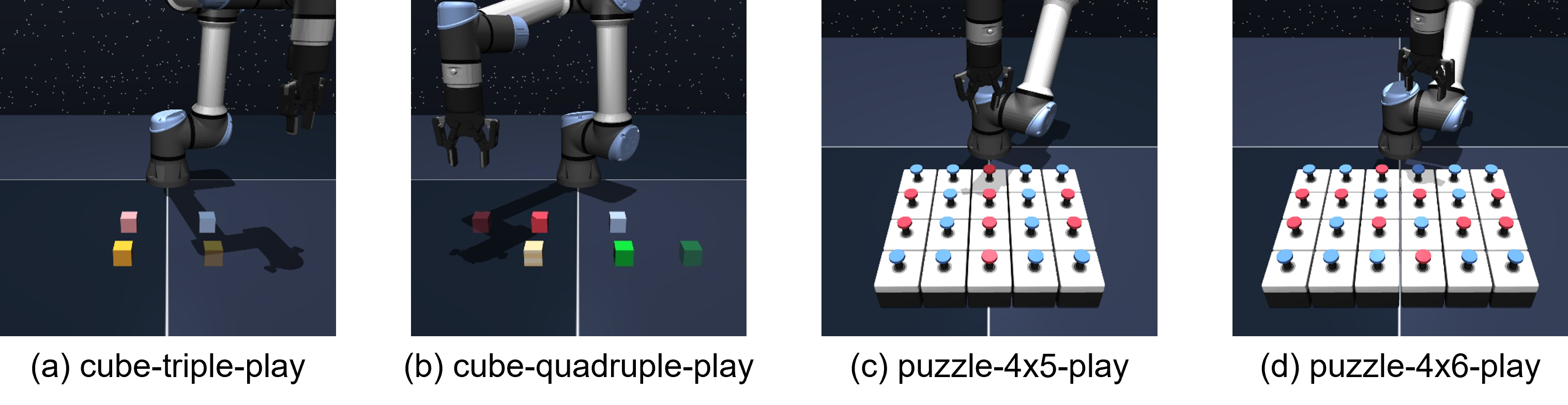}
    \caption{\textbf{OGBench Hard.} We experiment on 4 highly challenging domains on OGBench~\citep{ogbench_park2025} in singletask variant, shown as (a)-(d). Each domain contains 5 tasks.}
    \label{fig:ogbench_env_challenge}
    
\end{figure}

We evaluate \textit{VAST} on the OGBench task suite~\citep{ogbench_park2025}. OGBench is a benchmark for goal-conditioned offline RL, with challenging robotic locomotion and manipulation tasks, including whole-body humanoid control, maze navigation, and object manipulation. Our results cover 30 tasks across 6 widely used domains (\texttt{scene-play, cube-single-play, cube-double-play, puzzle-4x4-play, antmaze-large-nevigate, antmaze-giant-nevigate}), denoted as \textit{Basic} (refer to Figure~\ref{fig:ogbench_env_common}), and 20 tasks across 4 highly challenging domains (\texttt{cube-triple-play, cube-quadruple-play, puzzle-4x5-play, puzzle-4x6-play}), denoted as \textit{Hard} (refer to Figure~\ref{fig:ogbench_env_challenge}), for a total of 50 tasks on OGBench in \texttt{singletask} variant for offline RL. All tasks we evaluated are state-based; evaluation is performed on five pre-defined state-goal pairs. We introduce their settings in detail as follows:

\paragraph{Manipulation.} Manipulation tasks are performed with the UR5e robot arm and Robotiq 2F-85 gripper models from MuJoCo Menagerie~\citep{menagerie2022github} and the drawer, window, and button box models from Meta-World~\citep{yu2020meta}. We utilize \texttt{play} datasets for all domains, collected by open-loop, non-Markovian scripted policies defined by~\citet{ogbench_park2025}.

\begin{itemize}[leftmargin=2em]
    \item \textbf{scene}: Long-horizon manipulation involves interacting with multiple object types within one environment, including a cube block, a window, a drawer, and two button locks, where actions change object states and enable subsequent manipulation.

    \item \textbf{cube}: Pick-and-place manipulation of cube-shaped blocks to form specified target configurations. We use \texttt{single} to move one cube and \texttt{double} to arrange two cubes in the shared domain for common evaluation. We further select \texttt{triple} and \texttt{quadruple}, which place three and four cubes, respectively, as the challenge domains, thereby strongly increasing the combinatorial difficulty of assembling target configurations.
    
    \item \textbf{puzzle}: Solving with a robot arm where pressing buttons induces deterministic state transitions according to the puzzle rule (e.g., toggling the pressed cell and its neighbors). This series of domains is designed to test the agent's combinatorial generalization abilities. We use \texttt{4x4} ($4\times4$ grid) for common evaluation. We further select hard settings with larger grid sizes, \texttt{4x5} ($4\times5$ grid) and \texttt{4x6} ($4\times6$ grid), as the challenging domains.
    
\end{itemize}

\paragraph{Navigation.} Environments are adopted from Mujoco~\citep{todorov2012mujoco}, and the success of this series of domains is determined by the achieved proximity to a specified goal location in the maze. We utilize \texttt{navigate} datasets for all domains, collected by a noisy expert policy that navigates the maze by repeatedly reaching randomly sampled goals.

\begin{itemize}[leftmargin=2em]
    \item \textbf{antmaze}: Controlling a quadrupedal Ant agent with 8 degrees of freedom (DoF) to reach a specified goal location, with success evaluated by the achieved proximity to the goal in the maze. We select \texttt{large} maze and \texttt{giant} maze (twice the size of \texttt{large}) in this work.
\end{itemize}

\subsection{Evaluation}
\label{appendix:experimental_evaluation}

We evaluate all methods trained on standard datasets provided by OGBench, and report the success rate at 1M gradient steps for each task. The data specifications are shown in Table~\ref{table:ogbench_data}.

\begin{table}[h]
\centering
\caption{\label{table:ogbench_data} \small \textbf{Data specifications.} Reported from~\citet{ogbench_park2025}}
\resizebox{1\linewidth}{!}{\scriptsize
\begin{tabular}{lccccc}
\toprule
\textbf{Dataset} & Transitions & Episodes & Data Episode Length & State Dim. & Action Dim. \\
\midrule
\texttt{scene-play}  & 1M    & 1000  & 1000  & 40    & 5
\\
\texttt{cube-single-play}    & 1M    & 1000  & 1000  & 28    & 5
\\
\texttt{cube-double-play}    & 1M    & 1000  & 1000  & 37    & 5
\\
\texttt{puzzle-4x4-play} & 1M    & 1000  & 1000  & 83    & 5
\\
\texttt{antmaze-large-navigate}   & 1M   & 1000  & 1000  & 29    & 8
\\
\texttt{antmaze-giant-navigate}   & 1M   & 500   & 2000  & 29    & 8
\\
\cmidrule(lr){1-6}
\texttt{cube-triple-play}    & 3M    & 3000  & 1000  & 46    & 5
\\
\texttt{cube-quadruple-play} & 5M    & 5000  & 1000  & 55    & 5
\\
\texttt{puzzle-4x5-play} & 3M    & 3000  & 1000  & 99    & 5
\\
\texttt{puzzle-4x6-play} & 5M    & 5000  & 1000  & 115    & 5
\\
\bottomrule
\end{tabular}}
\end{table}

\subsection{Baselines}
\label{appendix:experimental_baselines}

In this section, we provide the details of the baselines in this work. We select \textit{IQL}, \textit{FQL}, \textit{IFQL}, \textit{FQL-n}, \textit{IFQL-n}, \textit{QC}, \textit{VALUE-FLOWS}, and \textit{FLOQ}, in total of 8 baselines for a comprehensive comparison. Specifically, we report the performance of \textit{IQL} from~\citet{fql_park2025} and use the open-source codebases for experiments on others, using the corresponding default hyperparameters.

\begin{itemize} [leftmargin=*]
\item \textbf{\textit{IQL~\citep{Kostrikov2021OfflineRL}.}} \textit{IQL} is a strong and stable Gaussian-policy offline RL baseline. The key idea of \textit{IQL} is to learn a state value function via upper expectile regression over in-dataset action values,  and then use this value estimate to bootstrap the $Q$-function. The policy is subsequently extracted via \textit{AWR}-style~\citep{peng2019advantage} advantage-weighted behavioral cloning, thereby enabling policy improvement while remaining close to the data distribution.

\item \textbf{\textit{FQL/FQL-n~\citep{fql_park2025}.}} \textit{FQL} is a strong flow-policy offline RL baseline built on a behavior-regularized actor-critic framework. The key idea of \textit{FQL} is to train a behavior-cloned flow model for expressive action modeling, while learning a separate one-step policy that maximizes the critic under a distillation regularizer. Specifically, \textit{FQL} avoids unstable backpropagation through time and eliminates costly iterative inference, while still retaining much of the expressivity of flow-based policies by decoupling value maximization from the iterative flow-generation process. \textit{FQL-n} is the $n$-step backup variant of \textit{FQL}.

\item \textbf{\textit{IFQL/IFQL-n~\citep{fql_park2025}.}} \textit{IFQL} is a flow-based variant of \textit{IQL}~\citep{Kostrikov2021OfflineRL} that combines expectile-regression value learning with an imitation-trained flow policy and rejection sampling for policy extraction. \textit{IFQL}, therefore, is a traditional, strong, and stable in-sample baseline that benefits from expressive flow-based action modeling while avoiding direct optimization through the generative process. \textit{IFQL-n} is the $n$-step backup variant of \textit{IFQL}.

\item \textbf{\textit{QC~\citep{li2025reinforcement}.}} \textit{QC} is a strong baseline designed for long-horizon sparse-reward problems via action chunking. \textit{QC} learns a flow-matching behavior policy over temporally extended action chunks and a chunk-level critic, and performs rejection sampling to choose the action sequence with the highest predicted value. This chunked formulation enables longer-horizon TD back-ups and more temporally coherent exploration, which substantially improves learning in tasks where single-step control is ineffective. It is a representative baseline that explicitly addresses long-horizon difficulty through temporal abstraction and chunk-level value learning.

\item \textbf{\textit{VALUE-FLOWS~\citep{dongzheng2026value}.}} \textit{VALUE-FLOWS} is a strong distributional RL baseline that models the full return distribution using flow matching, rather than collapsing future outcomes into a single scalar $Q$-value. It learns a return vector field that satisfies the distributional Bellman equation and additionally uses estimated return variance to reweight training toward more uncertain transitions. \textit{VALUE-FLOWS} is particularly appealing for long-horizon tasks with multimodal or uncertain returns.

\item \textbf{\textit{FLOQ~\citep{agrawalla2025floq}.}} \textit{FLOQ} is a strong value-learning baseline that scales value estimation through iterative computation. Specifically, \textit{FLOQ} represents the $Q$-function as a flow-matching velocity field that transforms noise into a scalar $Q$-value through numerical integration, with dense supervision applied across intermediate integration steps. In practice, \textit{FLOQ} is instantiated on top of \textit{FQL}, preserving the expressive flow-policy framework while substantially improving critic capacity and enabling a form of test-time compute scaling through additional integration steps. It is a particularly strong baseline for evaluating improvements from value scaling.

\end{itemize}
\section{Implementation}
\label{appendix:implementation}

This section provides additional implementation information. See pseudo-code in Section~\ref{appendix:algo}, implementation and structural details in Section~\ref{appendix:imp_details}, and hyperparameters in Section~\ref{appendix:implementation_hyp}.

\subsection{Implementation details}
\label{appendix:imp_details}

We implement \textit{VAST} in JAX~\citep{jax2018github} on top of recent study by~\citet{park2025horizonreductionmakesrl}.

\paragraph{Architectures.} We train 5 neural networks in parallel: 2 of flow policy networks (stitching policy $\pi_\text{stitch}$; execute policy $\pi_\text{exec}$), and 3 of value networks (state value $V$; state-action value $Q$; horizon-based cumulative return $G$). We leverage the multi-layer perceptron (MLP) for each network; the depth and width are held constant within each domain.

\paragraph{Value learning.} In all experiments, we learn the optimal state-value function $V$ using IQL-style expectile regression~\citep{Kostrikov2021OfflineRL}. The sampled bootstrap target $(k, s_k) \sim \pi_\text{stitch}(\cdot \mid s)$ could enable a greedier and policy-aware value update. To better align value learning with both the \emph{stitching} and \emph{execution} policies, we approximate $V_{\text{target}}$ in Eq.~(\ref{eql:v_loss}) as
\begin{equation}
\label{eql:v_target}
V_\text{target}=G(s, s_k, k)+\gamma^k Q\bigl(s_k,\pi_\text{exec}(a_k \mid s_k, s_k^+)\bigr),
\end{equation}
where $(k^+, s_k^+) \sim \pi_\text{stitch}(\cdot \mid s_k)$. Here, $\pi_\text{exec}$ is trained to produce actions within the data distribution. As a result, the second term provides a stable approximation to the continuation value at $s_k$, while still reflecting the behavior induced by the combined stitching-and-execution policy.

For learning the $G$-function, given a batch of states $\{s_t\}$, we first uniformly sample a future state $s_{t+k}$ from the same trajectory, where the horizon $k$ is sampled from $\{2,\ldots,K\}$ and $K$ denotes the maximum horizon. We then uniformly sample an intermediate state $s_{t+i}$ between $s_t$ and $s_{t+k}$, with $i \sim \{1,\ldots,k-1\}$. The $G$-function is trained with the Monte Carlo objective in Eq.~(\ref{eql:g_mc}) and the compositional objective in Eq.~(\ref{eql:g_comp}), using a fixed balance weight of $0.5$. The maximum horizon $K$ controls the temporal range of MC supervision for the $G$-function. In practice, choosing $K$ based on the domain helps balance the expressivity of the $G$-network against task complexity. Detailed domain-specific settings are provided in Table~\ref{table:domain_spec_hyp}.

\paragraph{Policy extraction.} We use weighted regression for the objective of \emph{stitching} policy, which has been fully studied in previous works~\citep{zheng2024safe,zheng2025towards,liang2026dipole}. It starts with a regularized optimization problem:
\begin{equation} \label{eql:problem}
\max_{\pi_\text{stitch}} ~\mathbb{E}_{s \sim \mathcal{D}}\bigl[\mathbb{E}_{\pi_\text{stitch}}\left[V_\text{target} - V(s) \right]- \frac{1}{\beta} D_\text{KL}(\pi_\text{stitch}\left(\cdot | s\right) \| \mu \left(\cdot | s\right))\bigr],
\end{equation}
where $\mu$ denotes the data distribution. Following the deviation by~\citet{zheng2024safe}, we could have its closed-form solution as:
\begin{equation} \label{eql:closed_form}
    \pi^{\star}_\text{stitch}(k, s_k\mid s) \propto \mu(k, s_k\mid s)\cdot \exp \left(\beta \cdot \left(V_\text{target} - V(s)\right) \right).
\end{equation}
The above form gives us the weighted objectives for the stitching flow networks as Eq.~(\ref{eql:weighted_high}), follow the formulation by~\citet{zheng2024safe}. Also, given the parameterized execute flow network $\epsilon_\text{execute}$, the supervision objective is:
\begin{equation} \label{eql:bc_low}
\mathcal{L}_{\epsilon_\text{execute}}=\mathbb{E}_{t\sim U[0,1],\epsilon\sim\mathcal{N}(\mathbf{0},\mathbf{I}),(s,a,s_\text{goal})\sim \mathcal{D} }\left[\left\Arrowvert \epsilon-\epsilon_\text{execute}\left(a^t,s,s_\text{goal},t\right) \right \Arrowvert^2 \right].
\end{equation}
At inference time, we use rejection sampling to select the output of the execute policy $\pi_\text{exec}$ with $N_\text{rej}$ variants by tasks. See Section~\ref{appendix:implementation_hyp} for detailed hyperparameters.

\subsection{Hyperparameters}
\label{appendix:implementation_hyp}

In this section, we provide the detailed hyperparameter setup in Table~\ref{table:general_hyp} for the general hyperparameters, and Table~\ref{table:domain_spec_hyp} for the domain-specific hyperparameters.

\begin{table}[H]
\caption{\textbf{General hyperparameters}.}
\label{table:general_hyp}
\centering
\renewcommand{\arraystretch}{1.2}
\begin{tabular}{@{}lll@{}}
\toprule
\multicolumn{1}{c}{\multirow{15}{*}{\centering General}}    
        & Hyperparameter                & Value
\\ 
\cmidrule(lr){2-3}
        & Optimizer                     & Adam
\\
        & Policy learning rate          & 3e-4
\\
        & Value learning rate           & 3e-4
\\
        & Offline learning steps        & 1,000,000 
\\
        & Mini-batch                    & 1024 (\texttt{cube-triple}), 512 (others)
\\
        & Soft update factor            & 0.005
\\
        & Flow steps $T$                & 10
\\
        & Gradient clip                 & 1
\\
        & Expectile factor $\tau$       & 0.9
\\
        & Compositional factor $\lambda$  & 0.5
\\
        & Clip Q                        & false
\\
        & Subgoal sample probability    & 0.1 (current state), 0.8 (trajectory state), 0.1 (random state)
\\
\cmidrule(lr){1-3}
\multicolumn{1}{c}{\multirow{4}{*}{\centering Architecture}}        
        & Policy/value MLP hidden dimension    
                                    & [1024, 1024, 1024, 1024] (\texttt{cube-triple}), \\[-3pt]&
                                    & [512, 512, 512, 512] (others)
\\
        & Activation function           & tanh
\\
        & Layernorm                     & true
\\
\bottomrule
\end{tabular}
\end{table}

\begin{table}[h]
\centering
\caption{\label{table:domain_spec_hyp} \small \textbf{Domain-specific hyperparameters.}}
\resizebox{1\linewidth}{!}{\scriptsize
\begin{tabular}{lcccc}
\toprule
\textbf{Domain} & Max. horizon $K$ & $\beta$ & Rejection sampling $N_\text{rej}$ & Discount $\gamma$ \\
\midrule
\texttt{scene-play}         & 25    & 0.05  & 4 (task 1,2,4); 1 (others)    & 0.995
\\
\texttt{cube-single-play}   & 25    & 0.05  & 4     & 0.99
\\
\texttt{cube-double-play}   & 25    & 0.05  & 4     & 0.995
\\
\texttt{puzzle-4x4-play}    & 10    & 0.05 (task 2); 0.01 (others)  & 8 (task 5); 16 (others)   & 0.995 
\\
\texttt{antmaze-large-navigate} & 50    & 0.05   & 4     & 0.995 
\\
\texttt{antmaze-giant-navigate}   & 50    & 0.01 (task 1,2,4); 0.05 (task 3,5)   & 4     & 0.995
\\
\cmidrule(lr){1-5}
\texttt{cube-triple-play}    & 25   & 0.08  & 4     & 0.999
\\
\texttt{cube-quadruple-play} & 10   & 0.005  & 16     & 0.999
\\
\texttt{puzzle-4x5-play} & 10    & 0.005  & 32     & 0.995
\\
\texttt{puzzle-4x6-play} & 10    & 0.005  & 16     & 0.999
\\
\bottomrule
\end{tabular}}
\end{table}

\subsection{Algorithm Pseudo-code}
\label{appendix:algo}
We provide pseudo-codes for \textit{VAST} as Algorithm~\ref{alg:train}.

\begin{figure}[H]
\vspace*{-4ex}
\begin{minipage}[t]{\textwidth}
\begin{algorithm}[H]
\caption{\textit{VAST}}
\begin{algorithmic}
    \STATE \textcolor{byebyebabeblue}{\textbf{// Train time:}}
    \STATE Initialize $G$, $V$, $Q$, $\pi_\text{stitch}$, $\pi_\text{exec}$
    \WHILE{not converged}
        \STATE Sample batch ${(s_t,a_t,r_t,s_{t+1},s_\text{goal},s_k,k,s_i,i) \sim D}$ from offline data
        \STATE \textcolor{byebyebabeblue}{// Policy Learning}
        \STATE Update stitching policy $\pi_\text{stitch}(s_k,k \mid s)$ with loss (Eq.~(\ref{eql:weighted_high}))
        \STATE Update execute policy $\pi_\text{exec}(a \mid s, s_\text{goal})$ with loss (Eq.~(\ref{eql:bc_low}))
        \STATE \textcolor{byebyebabeblue}{// Value Learning}
        \STATE Update $G$ with loss (Eq.~(\ref{eql:g_loss}))
        \STATE Update $V$ with loss (Eq.~(\ref{eql:v_loss}))
        \STATE Update $Q$ with loss (Eq.~(\ref{eql:q_loss}))
    \ENDWHILE
    \STATE 
    \STATE \textcolor{byebyebabeblue}{\textbf{// Test time:}}
    \STATE Get initial state $s$
    \WHILE{not done}
        \STATE \textcolor{byebyebabeblue}{// Sub-goal prediction}
        \STATE Sample $s_\text{goal}$ from $\pi_\text{stitch}(\cdot | s)$
        \STATE \textcolor{byebyebabeblue}{// Execution}
        \STATE Sample $a_1,a_2,...,a_N$ from $\pi_\text{exec}(\cdot | s,s_\text{goal})$
        \STATE Set $a \leftarrow \arg \max_{a_1,a_2,...,a_N} Q(s,a_i)$
        \STATE Update $s$
    \ENDWHILE
\end{algorithmic}
\label{alg:train}
\end{algorithm}
\end{minipage}
\end{figure}

\section{Additional Results}
\label{appendix:fullresults}

\subsection{Full Results}
\label{appendix:fullresults_table}
In this section, we provide the full experimental results on OGBench (Table~\ref{table:ogbench_full}). Scores are aggregated over 8 random seeds, with $\pm$ indicating standard deviations. Best performances are highlighted with color. See Section~\ref{appendix:experimental_evaluation} for the evaluation portal.

\begin{table}[H]
\centering
\vspace{-20pt}
\caption{\label{table:ogbench_full} \textbf{Results on OGBench.}}
\resizebox{1\linewidth}{!}{\scriptsize
\begin{tabular}{lcccccccccc}
\toprule
&
& \multicolumn{3}{>{\columncolor[HTML]{EEEEEE}}c}{\textbf{1-step TD}}
& \multicolumn{2}{>{\columncolor[HTML]{EEEEEE}}c}{\textbf{n-step TD}}
& \multicolumn{1}{>{\columncolor[HTML]{EEEEEE}}c}{\textbf{Chunking}}
& \multicolumn{2}{>{\columncolor[HTML]{EEEEEE}}c}{\textbf{Generative}}
& \multicolumn{1}{>{\columncolor[HTML]{EEEEEE}}c}{\textbf{Stitching}} \\
\cmidrule(lr){3-5} \cmidrule(lr){6-7} \cmidrule(lr){8-8} \cmidrule(lr){9-10} \cmidrule(lr){11-11}
& \textbf{Task} & IQL & FQL & IFQL & FQL-n & IFQL-n & QC & VALUE-FLOWS & FLOQ & VAST \\
\midrule
\multirow{6}{*}{\texttt{scene-play}}  & 1     
                        & $\textrm{94}$ 
                        & \colorbox{tableblue}{$\textrm{100} \textcolor{lightgray}{\pm \textrm{0}}$} 
                        & $\textrm{99} \textcolor{lightgray}{\pm \textrm{1}}$ 
                        & \colorbox{tableblue}{$\textrm{100} \textcolor{lightgray}{\pm \textrm{0}}$} 
                        & $\textrm{99} \textcolor{lightgray}{\pm \textrm{1}}$ 
                        & \colorbox{tableblue}{$\textrm{100} \textcolor{lightgray}{\pm \textrm{0}}$}
                        & \colorbox{tableblue}{$\textrm{100} \textcolor{lightgray}{\pm \textrm{1}}$} 
                        & \colorbox{tableblue}{$\textrm{100} \textcolor{lightgray}{\pm \textrm{0}}$} 
                        & \colorbox{tableblue}{$\textrm{100} \textcolor{lightgray}{\pm \textrm{0}}$} 
\\
                        & 2     
                        & $\textrm{12}$ 
                        & $\textrm{81} \textcolor{lightgray}{\pm \textrm{14}}$ 
                        & $\textrm{55} \textcolor{lightgray}{\pm \textrm{26}}$ 
                        & $\textrm{26} \textcolor{lightgray}{\pm \textrm{15}}$ 
                        & $\textrm{0} \textcolor{lightgray}{\pm \textrm{0}}$ 
                        & \colorbox{tableblue}{$\textrm{99} \textcolor{lightgray}{\pm \textrm{2}}$} 
                        & \colorbox{tableblue}{$\textrm{99} \textcolor{lightgray}{\pm \textrm{1}}$} 
                        & $\textrm{82} \textcolor{lightgray}{\pm \textrm{11}}$ 
                        & \colorbox{tableblue}{$\textrm{99} \textcolor{lightgray}{\pm \textrm{2}}$} 
\\
                        & 3     
                        & $\textrm{32}$ 
                        & $\textrm{97} \textcolor{lightgray}{\pm \textrm{3}}$ 
                        & $\textrm{76} \textcolor{lightgray}{\pm \textrm{9}}$ 
                        & $\textrm{85} \textcolor{lightgray}{\pm \textrm{11}}$ 
                        & $\textrm{59} \textcolor{lightgray}{\pm \textrm{4}}$ 
                        & \colorbox{tableblue}{$\textrm{99} \textcolor{lightgray}{\pm \textrm{1}}$} 
                        & $\textrm{95} \textcolor{lightgray}{\pm \textrm{6}}$ 
                        & $\textrm{98} \textcolor{lightgray}{\pm \textrm{2}}$ 
                        & $\textrm{97} \textcolor{lightgray}{\pm \textrm{1}}$ 
\\
                        & 4     
                        & $\textrm{0}$ 
                        & \colorbox{tableblue}{$\textrm{9} \textcolor{lightgray}{\pm \textrm{7}}$} 
                        & $\textrm{0} \textcolor{lightgray}{\pm \textrm{0}}$ 
                        & $\textrm{0} \textcolor{lightgray}{\pm \textrm{0}}$ 
                        & $\textrm{0} \textcolor{lightgray}{\pm \textrm{1}}$ 
                        & $\textrm{1} \textcolor{lightgray}{\pm \textrm{2}}$ 
                        & $\textrm{5} \textcolor{lightgray}{\pm \textrm{8}}$ 
                        & $\textrm{4} \textcolor{lightgray}{\pm \textrm{4}}$ 
                        & $\textrm{3} \textcolor{lightgray}{\pm \textrm{2}}$ 
\\
                        & 5     
                        & $\textrm{0}$ 
                        & $\textrm{0} \textcolor{lightgray}{\pm \textrm{0}}$ 
                        & $\textrm{0} \textcolor{lightgray}{\pm \textrm{0}}$ 
                        & $\textrm{0} \textcolor{lightgray}{\pm \textrm{0}}$ 
                        & $\textrm{0} \textcolor{lightgray}{\pm \textrm{0}}$ 
                        & $\textrm{0} \textcolor{lightgray}{\pm \textrm{0}}$ 
                        & $\textrm{0} \textcolor{lightgray}{\pm \textrm{0}}$ 
                        & $\textrm{0} \textcolor{lightgray}{\pm \textrm{0}}$ 
                        & $\textrm{0} \textcolor{lightgray}{\pm \textrm{1}}$ 
\\
\cmidrule(lr){2-11}
                        & agg.     
                        & $\textrm{28}$ 
                        & $\textrm{57} \textcolor{lightgray}{\pm \textrm{7}}$ 
                        & $\textrm{46} \textcolor{lightgray}{\pm \textrm{12}}$ 
                        & $\textrm{42} \textcolor{lightgray}{\pm \textrm{8}}$ 
                        & $\textrm{32} \textcolor{lightgray}{\pm \textrm{2}}$ 
                        & \colorbox{tableblue}{$\textrm{60} \textcolor{lightgray}{\pm \textrm{1}}$} 
                        & \colorbox{tableblue}{$\textrm{60} \textcolor{lightgray}{\pm \textrm{5}}$} 
                        & $\textrm{57} \textcolor{lightgray}{\pm \textrm{5}}$ 
                        & \colorbox{tableblue}{$\textrm{60} \textcolor{lightgray}{\pm \textrm{1}}$}
\\
\cmidrule(lr){1-11}
\multirow{6}{*}{\texttt{cube-single-play}}    & 1 
                        & $\textrm{88}$ 
                        & $\textrm{96} \textcolor{lightgray}{\pm \textrm{7}}$ 
                        & $\textrm{84} \textcolor{lightgray}{\pm \textrm{7}}$ 
                        & $\textrm{97} \textcolor{lightgray}{\pm \textrm{2}}$ 
                        & $\textrm{77} \textcolor{lightgray}{\pm \textrm{8}}$ 
                        & $\textrm{98} \textcolor{lightgray}{\pm \textrm{2}}$ 
                        & $\textrm{97} \textcolor{lightgray}{\pm \textrm{2}}$ 
                        & \colorbox{tableblue}{$\textrm{99} \textcolor{lightgray}{\pm \textrm{1}}$}
                        & \colorbox{tableblue}{$\textrm{99} \textcolor{lightgray}{\pm \textrm{3}}$} 
\\
                        & 2     
                        & $\textrm{85}$ 
                        & $\textrm{97} \textcolor{lightgray}{\pm \textrm{3}}$ 
                        & $\textrm{89} \textcolor{lightgray}{\pm \textrm{5}}$ 
                        & $\textrm{96} \textcolor{lightgray}{\pm \textrm{1}}$ 
                        & $\textrm{86} \textcolor{lightgray}{\pm \textrm{5}}$ 
                        & $\textrm{94} \textcolor{lightgray}{\pm \textrm{4}}$ 
                        & $\textrm{97} \textcolor{lightgray}{\pm \textrm{4}}$ 
                        & \colorbox{tableblue}{$\textrm{98} \textcolor{lightgray}{\pm \textrm{3}}$} 
                        & \colorbox{tableblue}{$\textrm{98} \textcolor{lightgray}{\pm \textrm{4}}$} 
\\
                        & 3     
                        & $\textrm{91}$ 
                        & $\textrm{98} \textcolor{lightgray}{\pm \textrm{3}}$ 
                        & $\textrm{92} \textcolor{lightgray}{\pm \textrm{3}}$ 
                        & $\textrm{99} \textcolor{lightgray}{\pm \textrm{8}}$ 
                        & $\textrm{87} \textcolor{lightgray}{\pm \textrm{6}}$ 
                        & $\textrm{97} \textcolor{lightgray}{\pm \textrm{4}}$ 
                        & $\textrm{99} \textcolor{lightgray}{\pm \textrm{1}}$ 
                        & $\textrm{99} \textcolor{lightgray}{\pm \textrm{2}}$ 
                        & \colorbox{tableblue}{$\textrm{100} \textcolor{lightgray}{\pm \textrm{2}}$} 
\\
                        & 4     
                        & $\textrm{73}$ 
                        & $\textrm{96} \textcolor{lightgray}{\pm \textrm{5}}$ 
                        & $\textrm{87} \textcolor{lightgray}{\pm \textrm{4}}$ 
                        & \colorbox{tableblue}{$\textrm{98} \textcolor{lightgray}{\pm \textrm{2}}$} 
                        & $\textrm{78} \textcolor{lightgray}{\pm \textrm{6}}$ 
                        & $\textrm{89} \textcolor{lightgray}{\pm \textrm{4}}$ 
                        & $\textrm{95} \textcolor{lightgray}{\pm \textrm{3}}$ 
                        & $\textrm{96} \textcolor{lightgray}{\pm \textrm{3}}$ 
                        & $\textrm{96} \textcolor{lightgray}{\pm \textrm{3}}$ 
\\
                        & 5     
                        & $\textrm{78}$ 
                        & $\textrm{96} \textcolor{lightgray}{\pm \textrm{3}}$ 
                        & $\textrm{84} \textcolor{lightgray}{\pm \textrm{7}}$ 
                        & \colorbox{tableblue}{$\textrm{97} \textcolor{lightgray}{\pm \textrm{5}}$} 
                        & $\textrm{84} \textcolor{lightgray}{\pm \textrm{5}}$ 
                        & $\textrm{89} \textcolor{lightgray}{\pm \textrm{9}}$ 
                        & $\textrm{86} \textcolor{lightgray}{\pm \textrm{8}}$ 
                        & $\textrm{89} \textcolor{lightgray}{\pm \textrm{9}}$ 
                        & \colorbox{tableblue}{$\textrm{97} \textcolor{lightgray}{\pm \textrm{3}}$} 
\\
\cmidrule(lr){2-11}
                        & agg.     
                        & $\textrm{83}$ 
                        & $\textrm{97} \textcolor{lightgray}{\pm \textrm{4}}$ 
                        & $\textrm{87} \textcolor{lightgray}{\pm \textrm{5}}$ 
                        & $\textrm{97} \textcolor{lightgray}{\pm \textrm{4}}$
                        & $\textrm{82} \textcolor{lightgray}{\pm \textrm{6}}$ 
                        & $\textrm{93} \textcolor{lightgray}{\pm \textrm{5}}$ 
                        & $\textrm{95} \textcolor{lightgray}{\pm \textrm{4}}$ 
                        & $\textrm{96} \textcolor{lightgray}{\pm \textrm{5}}$ 
                        & \colorbox{tableblue}{$\textrm{98} \textcolor{lightgray}{\pm \textrm{3}}$} 
\\
\cmidrule(lr){1-11}
\multirow{6}{*}{\texttt{cube-double-play}}    & 1 
                        & $\textrm{27}$ 
                        & $\textrm{67} \textcolor{lightgray}{\pm \textrm{9}}$ 
                        & $\textrm{32} \textcolor{lightgray}{\pm \textrm{6}}$ 
                        & $\textrm{16} \textcolor{lightgray}{\pm \textrm{6}}$ 
                        & $\textrm{9} \textcolor{lightgray}{\pm \textrm{4}}$ 
                        & $\textrm{85} \textcolor{lightgray}{\pm \textrm{6}}$ 
                        & \colorbox{tableblue}{$\textrm{96} \textcolor{lightgray}{\pm \textrm{5}}$} 
                        & $\textrm{64} \textcolor{lightgray}{\pm \textrm{17}}$ 
                        & $\textrm{91} \textcolor{lightgray}{\pm \textrm{3}}$ 
\\
                        & 2     
                        & $\textrm{1}$ 
                        & $\textrm{38} \textcolor{lightgray}{\pm \textrm{10}}$ 
                        & $\textrm{11} \textcolor{lightgray}{\pm \textrm{4}}$ 
                        & $\textrm{1} \textcolor{lightgray}{\pm \textrm{1}}$ 
                        & $\textrm{7} \textcolor{lightgray}{\pm \textrm{5}}$ 
                        & $\textrm{78} \textcolor{lightgray}{\pm \textrm{6}}$ 
                        & $\textrm{66} \textcolor{lightgray}{\pm \textrm{11}}$ 
                        & $\textrm{61} \textcolor{lightgray}{\pm \textrm{22}}$ 
                        & \colorbox{tableblue}{$\textrm{88} \textcolor{lightgray}{\pm \textrm{5}}$} 
\\
                        & 3     
                        & $\textrm{0}$ 
                        & $\textrm{23} \textcolor{lightgray}{\pm \textrm{6}}$ 
                        & $\textrm{4} \textcolor{lightgray}{\pm \textrm{3}}$ 
                        & $\textrm{1} \textcolor{lightgray}{\pm \textrm{1}}$ 
                        & $\textrm{4} \textcolor{lightgray}{\pm \textrm{3}}$ 
                        & $\textrm{74} \textcolor{lightgray}{\pm \textrm{5}}$ 
                        & $\textrm{63} \textcolor{lightgray}{\pm \textrm{15}}$ 
                        & $\textrm{57} \textcolor{lightgray}{\pm \textrm{12}}$ 
                        & \colorbox{tableblue}{$\textrm{88} \textcolor{lightgray}{\pm \textrm{4}}$} 
\\
                        & 4     
                        & $\textrm{0}$ 
                        & $\textrm{5} \textcolor{lightgray}{\pm \textrm{2}}$ 
                        & $\textrm{1} \textcolor{lightgray}{\pm \textrm{2}}$ 
                        & $\textrm{0} \textcolor{lightgray}{\pm \textrm{0}}$ 
                        & $\textrm{0} \textcolor{lightgray}{\pm \textrm{0}}$ 
                        & \colorbox{tableblue}{$\textrm{26} \textcolor{lightgray}{\pm \textrm{9}}$} 
                        & $\textrm{20} \textcolor{lightgray}{\pm \textrm{11}}$ 
                        & $\textrm{12} \textcolor{lightgray}{\pm \textrm{9}}$ 
                        & $\textrm{18} \textcolor{lightgray}{\pm \textrm{4}}$ 
\\
                        & 5     
                        & $\textrm{4}$ 
                        & $\textrm{17} \textcolor{lightgray}{\pm \textrm{9}}$ 
                        & $\textrm{10} \textcolor{lightgray}{\pm \textrm{6}}$ 
                        & $\textrm{2} \textcolor{lightgray}{\pm \textrm{1}}$ 
                        & $\textrm{2} \textcolor{lightgray}{\pm \textrm{2}}$ 
                        & $\textrm{72} \textcolor{lightgray}{\pm \textrm{9}}$
                        & $\textrm{61} \textcolor{lightgray}{\pm \textrm{9}}$ 
                        & $\textrm{52} \textcolor{lightgray}{\pm \textrm{15}}$ 
                        & \colorbox{tableblue}{$\textrm{81} \textcolor{lightgray}{\pm \textrm{6}}$} 
\\
\cmidrule(lr){2-11}
                        & agg.     
                        & $\textrm{7}$ 
                        & $\textrm{30} \textcolor{lightgray}{\pm \textrm{8}}$ 
                        & $\textrm{12} \textcolor{lightgray}{\pm \textrm{5}}$ 
                        & $\textrm{4} \textcolor{lightgray}{\pm \textrm{3}}$ 
                        & $\textrm{4} \textcolor{lightgray}{\pm \textrm{3}}$ 
                        & $\textrm{67} \textcolor{lightgray}{\pm \textrm{7}}$
                        & $\textrm{61} \textcolor{lightgray}{\pm \textrm{11}}$ 
                        & $\textrm{49} \textcolor{lightgray}{\pm \textrm{16}}$ 
                        & \colorbox{tableblue}{$\textrm{73} \textcolor{lightgray}{\pm \textrm{5}}$}  
\\
\cmidrule(lr){1-11}
\multirow{6}{*}{\texttt{puzzle-4x4-play}}    & 1 
                        & $\textrm{12}$ 
                        & $\textrm{29} \textcolor{lightgray}{\pm \textrm{6}}$ 
                        & $\textrm{41} \textcolor{lightgray}{\pm \textrm{12}}$ 
                        & $\textrm{42} \textcolor{lightgray}{\pm \textrm{4}}$ 
                        & $\textrm{17} \textcolor{lightgray}{\pm \textrm{6}}$ 
                        & $\textrm{67} \textcolor{lightgray}{\pm \textrm{11}}$
                        & $\textrm{18} \textcolor{lightgray}{\pm \textrm{12}}$ 
                        & $\textrm{61} \textcolor{lightgray}{\pm \textrm{8}}$ 
                        & \colorbox{tableblue}{$\textrm{74} \textcolor{lightgray}{\pm \textrm{13}}$ }
\\
                        & 2     
                        & $\textrm{7}$ 
                        & $\textrm{14} \textcolor{lightgray}{\pm \textrm{7}}$ 
                        & $\textrm{17} \textcolor{lightgray}{\pm \textrm{9}}$ 
                        & $\textrm{20} \textcolor{lightgray}{\pm \textrm{3}}$ 
                        & $\textrm{9} \textcolor{lightgray}{\pm \textrm{4}}$ 
                        & $\textrm{1} \textcolor{lightgray}{\pm \textrm{2}}$ 
                        & \colorbox{tableblue}{$\textrm{33} \textcolor{lightgray}{\pm \textrm{2}}$} 
                        & $\textrm{22} \textcolor{lightgray}{\pm \textrm{8}}$ 
                        & $\textrm{17} \textcolor{lightgray}{\pm \textrm{10}}$
\\
                        & 3     
                        & $\textrm{9}$ 
                        & $\textrm{18} \textcolor{lightgray}{\pm \textrm{6}}$ 
                        & $\textrm{48} \textcolor{lightgray}{\pm \textrm{8}}$ 
                        & $\textrm{35} \textcolor{lightgray}{\pm \textrm{8}}$ 
                        & $\textrm{15} \textcolor{lightgray}{\pm \textrm{3}}$ 
                        & $\textrm{64} \textcolor{lightgray}{\pm \textrm{17}}$
                        & $\textrm{18} \textcolor{lightgray}{\pm \textrm{12}}$ 
                        & $\textrm{40} \textcolor{lightgray}{\pm \textrm{15}}$ 
                        & \colorbox{tableblue}{$\textrm{82} \textcolor{lightgray}{\pm \textrm{13}}$} 
\\
                        & 4     
                        & $\textrm{5}$ 
                        & $\textrm{9} \textcolor{lightgray}{\pm \textrm{2}}$ 
                        & $\textrm{20} \textcolor{lightgray}{\pm \textrm{6}}$ 
                        & $\textrm{20} \textcolor{lightgray}{\pm \textrm{5}}$ 
                        & $\textrm{10} \textcolor{lightgray}{\pm \textrm{3}}$ 
                        & $\textrm{10} \textcolor{lightgray}{\pm \textrm{8}}$ 
                        & $\textrm{20} \textcolor{lightgray}{\pm \textrm{5}}$ 
                        & $\textrm{19} \textcolor{lightgray}{\pm \textrm{6}}$ 
                        & \colorbox{tableblue}{$\textrm{30} \textcolor{lightgray}{\pm \textrm{7}}$} 
\\
                        & 5     
                        & $\textrm{4}$ 
                        & $\textrm{6} \textcolor{lightgray}{\pm \textrm{5}}$ 
                        & $\textrm{7} \textcolor{lightgray}{\pm \textrm{3}}$ 
                        & \colorbox{tableblue}{$\textrm{17} \textcolor{lightgray}{\pm \textrm{7}}$} 
                        & $\textrm{7} \textcolor{lightgray}{\pm \textrm{4}}$ 
                        & $\textrm{3} \textcolor{lightgray}{\pm \textrm{3}}$ 
                        & $\textrm{14} \textcolor{lightgray}{\pm \textrm{6}}$ 
                        & $\textrm{14} \textcolor{lightgray}{\pm \textrm{4}}$ 
                        & \colorbox{tableblue}{$\textrm{17} \textcolor{lightgray}{\pm \textrm{5}}$} 
\\
\cmidrule(lr){2-11}
                        & agg.     
                        & $\textrm{7}$ 
                        & $\textrm{15} \textcolor{lightgray}{\pm \textrm{5}}$ 
                        & $\textrm{27} \textcolor{lightgray}{\pm \textrm{8}}$ 
                        & $\textrm{27} \textcolor{lightgray}{\pm \textrm{6}}$ 
                        & $\textrm{12} \textcolor{lightgray}{\pm \textrm{4}}$ 
                        & $\textrm{29} \textcolor{lightgray}{\pm \textrm{10}}$ 
                        & $\textrm{21} \textcolor{lightgray}{\pm \textrm{8}}$ 
                        & $\textrm{31} \textcolor{lightgray}{\pm \textrm{9}}$ 
                        & \colorbox{tableblue}{$\textrm{44} \textcolor{lightgray}{\pm \textrm{10}}$}
\\
\cmidrule(lr){1-11}
\multirow{6}{*}{\texttt{antmaze-large-navigate}}    & 1 
                        & $\textrm{48}$ 
                        & $\textrm{87} \textcolor{lightgray}{\pm \textrm{6}}$ 
                        & $\textrm{34} \textcolor{lightgray}{\pm \textrm{20}}$ 
                        & $\textrm{80} \textcolor{lightgray}{\pm \textrm{10}}$ 
                        & $\textrm{15} \textcolor{lightgray}{\pm \textrm{10}}$ 
                        & $\textrm{4} \textcolor{lightgray}{\pm \textrm{5}}$ 
                        & $\textrm{22} \textcolor{lightgray}{\pm \textrm{11}}$ 
                        & \colorbox{tableblue}{$\textrm{96} \textcolor{lightgray}{\pm \textrm{3}}$} 
                        & $\textrm{79} \textcolor{lightgray}{\pm \textrm{9}}$
\\
                        & 2     
                        & $\textrm{42}$ 
                        & $\textrm{63} \textcolor{lightgray}{\pm \textrm{10}}$ 
                        & $\textrm{17} \textcolor{lightgray}{\pm \textrm{7}}$ 
                        & $\textrm{59} \textcolor{lightgray}{\pm \textrm{14}}$ 
                        & $\textrm{0} \textcolor{lightgray}{\pm \textrm{0}}$ 
                        & $\textrm{0} \textcolor{lightgray}{\pm \textrm{0}}$ 
                        & $\textrm{53} \textcolor{lightgray}{\pm \textrm{14}}$ 
                        &\colorbox{tableblue}{$\textrm{83} \textcolor{lightgray}{\pm \textrm{11}}$} 
                        & $\textrm{78} \textcolor{lightgray}{\pm \textrm{6}}$ 
\\
                        & 3     
                        & $\textrm{72}$ 
                        & $\textrm{96} \textcolor{lightgray}{\pm \textrm{3}}$ 
                        & $\textrm{49} \textcolor{lightgray}{\pm \textrm{12}}$ 
                        & $\textrm{91} \textcolor{lightgray}{\pm \textrm{5}}$ 
                        & $\textrm{57} \textcolor{lightgray}{\pm \textrm{8}}$ 
                        & $\textrm{19} \textcolor{lightgray}{\pm \textrm{10}}$ 
                        & $\textrm{53} \textcolor{lightgray}{\pm \textrm{23}}$ 
                        & \colorbox{tableblue}{$\textrm{98} \textcolor{lightgray}{\pm \textrm{2}}$} 
                        & $\textrm{81} \textcolor{lightgray}{\pm \textrm{1}}$ 
\\
                        & 4     
                        & $\textrm{51}$ 
                        & $\textrm{80} \textcolor{lightgray}{\pm \textrm{8}}$ 
                        & $\textrm{15} \textcolor{lightgray}{\pm \textrm{13}}$ 
                        & $\textrm{83} \textcolor{lightgray}{\pm \textrm{7}}$ 
                        & $\textrm{8} \textcolor{lightgray}{\pm \textrm{6}}$ 
                        & $\textrm{0} \textcolor{lightgray}{\pm \textrm{0}}$ 
                        & $\textrm{50} \textcolor{lightgray}{\pm \textrm{17}}$ 
                        & \colorbox{tableblue}{$\textrm{92} \textcolor{lightgray}{\pm \textrm{7}}$} 
                        & $\textrm{77} \textcolor{lightgray}{\pm \textrm{9}}$ 
\\
                        & 5     
                        & $\textrm{54}$ 
                        & $\textrm{85} \textcolor{lightgray}{\pm \textrm{6}}$ 
                        & $\textrm{44} \textcolor{lightgray}{\pm \textrm{26}}$ 
                        & $\textrm{86} \textcolor{lightgray}{\pm \textrm{6}}$ 
                        & $\textrm{56} \textcolor{lightgray}{\pm \textrm{7}}$ 
                        & $\textrm{8} \textcolor{lightgray}{\pm \textrm{3}}$ 
                        & $\textrm{61} \textcolor{lightgray}{\pm \textrm{5}}$ 
                        & \colorbox{tableblue}{$\textrm{95} \textcolor{lightgray}{\pm \textrm{3}}$} 
                        & $\textrm{78} \textcolor{lightgray}{\pm \textrm{9}}$ 
\\
\cmidrule(lr){2-11}
                        & agg.     
                        & $\textrm{53}$ 
                        & $\textrm{82} \textcolor{lightgray}{\pm \textrm{7}}$ 
                        & $\textrm{32} \textcolor{lightgray}{\pm \textrm{17}}$ 
                        & $\textrm{80} \textcolor{lightgray}{\pm \textrm{9}}$ 
                        & $\textrm{27} \textcolor{lightgray}{\pm \textrm{7}}$ 
                        & $\textrm{6} \textcolor{lightgray}{\pm \textrm{5}}$ 
                        & $\textrm{48} \textcolor{lightgray}{\pm \textrm{15}}$ 
                        & \colorbox{tableblue}{$\textrm{93} \textcolor{lightgray}{\pm \textrm{6}}$} 
                        & $\textrm{79} \textcolor{lightgray}{\pm \textrm{7}}$ 
\\
\cmidrule(lr){1-11}
\multirow{6}{*}{\texttt{antmaze-giant-navigate}}    & 1 
                        & $\textrm{0}$ 
                        & $\textrm{12} \textcolor{lightgray}{\pm \textrm{12}}$ 
                        & $\textrm{0} \textcolor{lightgray}{\pm \textrm{0}}$ 
                        & $\textrm{3} \textcolor{lightgray}{\pm \textrm{6}}$ 
                        & $\textrm{0} \textcolor{lightgray}{\pm \textrm{0}}$ 
                        & $\textrm{0} \textcolor{lightgray}{\pm \textrm{0}}$ 
                        & $\textrm{0} \textcolor{lightgray}{\pm \textrm{0}}$ 
                        & \colorbox{tableblue}{$\textrm{75} \textcolor{lightgray}{\pm \textrm{12}}$} 
                        & $\textrm{1} \textcolor{lightgray}{\pm \textrm{2}}$ 
\\
                        & 2     
                        & $\textrm{1}$ 
                        & $\textrm{21} \textcolor{lightgray}{\pm \textrm{18}}$ 
                        & $\textrm{0} \textcolor{lightgray}{\pm \textrm{0}}$ 
                        & \colorbox{tableblue}{$\textrm{88} \textcolor{lightgray}{\pm \textrm{3}}$} 
                        & $\textrm{0} \textcolor{lightgray}{\pm \textrm{0}}$ 
                        & $\textrm{0} \textcolor{lightgray}{\pm \textrm{0}}$ 
                        & $\textrm{1} \textcolor{lightgray}{\pm \textrm{1}}$ 
                        & $\textrm{37} \textcolor{lightgray}{\pm \textrm{30}}$ 
                        & $\textrm{50} \textcolor{lightgray}{\pm \textrm{9}}$ 
\\
                        & 3     
                        & $\textrm{0}$ 
                        & $\textrm{3} \textcolor{lightgray}{\pm \textrm{1}}$ 
                        & $\textrm{0} \textcolor{lightgray}{\pm \textrm{0}}$ 
                        & \colorbox{tableblue}{$\textrm{17} \textcolor{lightgray}{\pm \textrm{7}}$} 
                        & $\textrm{0} \textcolor{lightgray}{\pm \textrm{0}}$ 
                        & $\textrm{0} \textcolor{lightgray}{\pm \textrm{0}}$ 
                        & $\textrm{0} \textcolor{lightgray}{\pm \textrm{0}}$ 
                        & $\textrm{0} \textcolor{lightgray}{\pm \textrm{0}}$ 
                        & $\textrm{10} \textcolor{lightgray}{\pm \textrm{6}}$ 
\\
                        & 4     
                        & $\textrm{0}$ 
                        & $\textrm{10} \textcolor{lightgray}{\pm \textrm{21}}$ 
                        & $\textrm{0} \textcolor{lightgray}{\pm \textrm{0}}$ 
                        & \colorbox{tableblue}{$\textrm{40} \textcolor{lightgray}{\pm \textrm{32}}$} 
                        & $\textrm{0} \textcolor{lightgray}{\pm \textrm{0}}$ 
                        & $\textrm{0} \textcolor{lightgray}{\pm \textrm{0}}$ 
                        & $\textrm{0} \textcolor{lightgray}{\pm \textrm{0}}$ 
                        & $\textrm{17} \textcolor{lightgray}{\pm \textrm{28}}$ 
                        & $\textrm{13} \textcolor{lightgray}{\pm \textrm{7}}$ 
\\
                        & 5     
                        & $\textrm{19}$ 
                        & $\textrm{7} \textcolor{lightgray}{\pm \textrm{18}}$ 
                        & $\textrm{1} \textcolor{lightgray}{\pm \textrm{4}}$ 
                        & \colorbox{tableblue}{$\textrm{36} \textcolor{lightgray}{\pm \textrm{13}}$} 
                        & $\textrm{1} \textcolor{lightgray}{\pm \textrm{1}}$ 
                        & $\textrm{0} \textcolor{lightgray}{\pm \textrm{0}}$ 
                        & $\textrm{13} \textcolor{lightgray}{\pm \textrm{7}}$ 
                        & $\textrm{15} \textcolor{lightgray}{\pm \textrm{36}}$ 
                        & $\textrm{23} \textcolor{lightgray}{\pm \textrm{15}}$ 
\\
\cmidrule(lr){2-11}
                        & agg.     
                        & $\textrm{4}$ 
                        & $\textrm{11} \textcolor{lightgray}{\pm \textrm{16}}$ 
                        & $\textrm{0} \textcolor{lightgray}{\pm \textrm{2}}$ 
                        & \colorbox{tableblue}{$\textrm{37} \textcolor{lightgray}{\pm \textrm{16}}$} 
                        & $\textrm{0} \textcolor{lightgray}{\pm \textrm{0}}$ 
                        & $\textrm{0} \textcolor{lightgray}{\pm \textrm{0}}$ 
                        & $\textrm{3} \textcolor{lightgray}{\pm \textrm{3}}$ 
                        & $\textrm{29} \textcolor{lightgray}{\pm \textrm{25}}$ 
                        & $\textrm{19} \textcolor{lightgray}{\pm \textrm{9}}$
\\
\cmidrule(lr){1-11}
\multirow{6}{*}{\texttt{cube-triple-play}}    & 1 
                        & $\textrm{-}$ 
                        & $\textrm{6} \textcolor{lightgray}{\pm \textrm{8}}$ 
                        & $\textrm{3} \textcolor{lightgray}{\pm \textrm{2}}$ 
                        & $\textrm{1} \textcolor{lightgray}{\pm \textrm{1}}$ 
                        & $\textrm{1} \textcolor{lightgray}{\pm \textrm{1}}$ 
                        & $\textrm{19} \textcolor{lightgray}{\pm \textrm{11}}$ 
                        & $\textrm{53} \textcolor{lightgray}{\pm \textrm{22}}$ 
                        & $\textrm{17} \textcolor{lightgray}{\pm \textrm{10}}$ 
                        & \colorbox{tableblue}{$\textrm{82} \textcolor{lightgray}{\pm \textrm{8}}$} 
\\
                        & 2     
                        & $\textrm{-}$ 
                        & $\textrm{0} \textcolor{lightgray}{\pm \textrm{0}}$ 
                        & $\textrm{0} \textcolor{lightgray}{\pm \textrm{0}}$ 
                        & $\textrm{0} \textcolor{lightgray}{\pm \textrm{0}}$ 
                        & $\textrm{0} \textcolor{lightgray}{\pm \textrm{0}}$ 
                        & $\textrm{2} \textcolor{lightgray}{\pm \textrm{2}}$ 
                        & $\textrm{0} \textcolor{lightgray}{\pm \textrm{0}}$ 
                        & $\textrm{0} \textcolor{lightgray}{\pm \textrm{0}}$ 
                        & \colorbox{tableblue}{$\textrm{40} \textcolor{lightgray}{\pm \textrm{13}}$} 
\\
                        & 3     
                        & $\textrm{-}$ 
                        & $\textrm{0} \textcolor{lightgray}{\pm \textrm{0}}$ 
                        & $\textrm{0} \textcolor{lightgray}{\pm \textrm{0}}$ 
                        & $\textrm{0} \textcolor{lightgray}{\pm \textrm{0}}$ 
                        & $\textrm{0} \textcolor{lightgray}{\pm \textrm{0}}$ 
                        & $\textrm{1} \textcolor{lightgray}{\pm \textrm{1}}$ 
                        & $\textrm{8} \textcolor{lightgray}{\pm \textrm{5}}$ 
                        & $\textrm{1} \textcolor{lightgray}{\pm \textrm{2}}$ 
                        & \colorbox{tableblue}{$\textrm{45} \textcolor{lightgray}{\pm \textrm{9}}$} 
\\
                        & 4     
                        & $\textrm{-}$ 
                        & $\textrm{0} \textcolor{lightgray}{\pm \textrm{0}}$ 
                        & $\textrm{0} \textcolor{lightgray}{\pm \textrm{0}}$ 
                        & $\textrm{0} \textcolor{lightgray}{\pm \textrm{0}}$ 
                        & $\textrm{0} \textcolor{lightgray}{\pm \textrm{0}}$ 
                        & $\textrm{0} \textcolor{lightgray}{\pm \textrm{0}}$ 
                        & $\textrm{1} \textcolor{lightgray}{\pm \textrm{2}}$ 
                        & $\textrm{0} \textcolor{lightgray}{\pm \textrm{0}}$ 
                        & \colorbox{tableblue}{$\textrm{7} \textcolor{lightgray}{\pm \textrm{5}}$} 
\\
                        & 5     
                        & $\textrm{-}$ 
                        & $\textrm{0} \textcolor{lightgray}{\pm \textrm{0}}$ 
                        & $\textrm{0} \textcolor{lightgray}{\pm \textrm{0}}$ 
                        & $\textrm{0} \textcolor{lightgray}{\pm \textrm{0}}$ 
                        & $\textrm{0} \textcolor{lightgray}{\pm \textrm{0}}$ 
                        & $\textrm{0} \textcolor{lightgray}{\pm \textrm{0}}$ 
                        & $\textrm{1} \textcolor{lightgray}{\pm \textrm{1}}$ 
                        & $\textrm{0} \textcolor{lightgray}{\pm \textrm{0}}$ 
                        & \colorbox{tableblue}{$\textrm{7} \textcolor{lightgray}{\pm \textrm{4}}$} 
\\
\cmidrule(lr){2-11}
                        & agg.     
                        & $\textrm{-}$ 
                        & $\textrm{1} \textcolor{lightgray}{\pm \textrm{4}}$ 
                        & $\textrm{1} \textcolor{lightgray}{\pm \textrm{1}}$ 
                        & $\textrm{0} \textcolor{lightgray}{\pm \textrm{0}}$ 
                        & $\textrm{0} \textcolor{lightgray}{\pm \textrm{0}}$ 
                        & $\textrm{4} \textcolor{lightgray}{\pm \textrm{5}}$ 
                        & $\textrm{13} \textcolor{lightgray}{\pm \textrm{10}}$ 
                        & $\textrm{4} \textcolor{lightgray}{\pm \textrm{5}}$ 
                        & \colorbox{tableblue}{$\textrm{36} \textcolor{lightgray}{\pm \textrm{8}}$}
\\
\cmidrule(lr){1-11}
\multirow{6}{*}{\texttt{cube-quadruple-play}}    & 1 
                        & $\textrm{-}$ 
                        & $\textrm{0} \textcolor{lightgray}{\pm \textrm{0}}$ 
                        & $\textrm{0} \textcolor{lightgray}{\pm \textrm{0}}$ 
                        & $\textrm{0} \textcolor{lightgray}{\pm \textrm{0}}$ 
                        & $\textrm{0} \textcolor{lightgray}{\pm \textrm{0}}$ 
                        & $\textrm{1} \textcolor{lightgray}{\pm \textrm{1}}$ 
                        & $\textrm{0} \textcolor{lightgray}{\pm \textrm{0}}$ 
                        & $\textrm{0} \textcolor{lightgray}{\pm \textrm{0}}$ 
                        & \colorbox{tableblue}{$\textrm{8} \textcolor{lightgray}{\pm \textrm{9}}$ }
\\
                        & 2     
                        & $\textrm{-}$ 
                        & $\textrm{0} \textcolor{lightgray}{\pm \textrm{0}}$ 
                        & $\textrm{0} \textcolor{lightgray}{\pm \textrm{0}}$ 
                        & $\textrm{0} \textcolor{lightgray}{\pm \textrm{0}}$ 
                        & $\textrm{0} \textcolor{lightgray}{\pm \textrm{0}}$ 
                        & $\textrm{0} \textcolor{lightgray}{\pm \textrm{0}}$ 
                        & $\textrm{0} \textcolor{lightgray}{\pm \textrm{0}}$ 
                        & $\textrm{0} \textcolor{lightgray}{\pm \textrm{0}}$ 
                        & $\textrm{0} \textcolor{lightgray}{\pm \textrm{0}}$ 
\\
                        & 3     
                        & $\textrm{-}$ 
                        & $\textrm{0} \textcolor{lightgray}{\pm \textrm{0}}$ 
                        & $\textrm{0} \textcolor{lightgray}{\pm \textrm{0}}$ 
                        & $\textrm{0} \textcolor{lightgray}{\pm \textrm{0}}$ 
                        & $\textrm{0} \textcolor{lightgray}{\pm \textrm{0}}$ 
                        & $\textrm{0} \textcolor{lightgray}{\pm \textrm{0}}$ 
                        & $\textrm{0} \textcolor{lightgray}{\pm \textrm{0}}$ 
                        & $\textrm{0} \textcolor{lightgray}{\pm \textrm{0}}$ 
                        & $\textrm{0} \textcolor{lightgray}{\pm \textrm{0}}$ 
\\
                        & 4     
                        & $\textrm{-}$ 
                        & $\textrm{0} \textcolor{lightgray}{\pm \textrm{0}}$ 
                        & $\textrm{0} \textcolor{lightgray}{\pm \textrm{0}}$ 
                        & $\textrm{0} \textcolor{lightgray}{\pm \textrm{0}}$ 
                        & $\textrm{0} \textcolor{lightgray}{\pm \textrm{0}}$ 
                        & $\textrm{0} \textcolor{lightgray}{\pm \textrm{0}}$ 
                        & $\textrm{0} \textcolor{lightgray}{\pm \textrm{0}}$ 
                        & $\textrm{0} \textcolor{lightgray}{\pm \textrm{0}}$ 
                        & $\textrm{0} \textcolor{lightgray}{\pm \textrm{0}}$ 
\\
                        & 5     
                        & $\textrm{-}$ 
                        & $\textrm{0} \textcolor{lightgray}{\pm \textrm{0}}$ 
                        & $\textrm{0} \textcolor{lightgray}{\pm \textrm{0}}$ 
                        & $\textrm{0} \textcolor{lightgray}{\pm \textrm{0}}$ 
                        & $\textrm{0} \textcolor{lightgray}{\pm \textrm{0}}$ 
                        & $\textrm{0} \textcolor{lightgray}{\pm \textrm{0}}$ 
                        & $\textrm{0} \textcolor{lightgray}{\pm \textrm{0}}$ 
                        & $\textrm{0} \textcolor{lightgray}{\pm \textrm{0}}$ 
                        & $\textrm{0} \textcolor{lightgray}{\pm \textrm{0}}$ 
\\
\cmidrule(lr){2-11}
                        & agg.     
                        & $\textrm{-}$ 
                        & $\textrm{0} \textcolor{lightgray}{\pm \textrm{0}}$ 
                        & $\textrm{0} \textcolor{lightgray}{\pm \textrm{0}}$ 
                        & $\textrm{0} \textcolor{lightgray}{\pm \textrm{0}}$ 
                        & $\textrm{0} \textcolor{lightgray}{\pm \textrm{0}}$ 
                        & $\textrm{0} \textcolor{lightgray}{\pm \textrm{0}}$ 
                        & $\textrm{0} \textcolor{lightgray}{\pm \textrm{0}}$ 
                        & $\textrm{0} \textcolor{lightgray}{\pm \textrm{0}}$ 
                        & \colorbox{tableblue}{$\textrm{2} \textcolor{lightgray}{\pm \textrm{4}}$}
\\
\cmidrule(lr){1-11}
\multirow{6}{*}{\texttt{puzzle-4x5-play}}    & 1 
                        & $\textrm{-}$ 
                        & $\textrm{75} \textcolor{lightgray}{\pm \textrm{7}}$ 
                        & $\textrm{78} \textcolor{lightgray}{\pm \textrm{15}}$ 
                        & $\textrm{92} \textcolor{lightgray}{\pm \textrm{3}}$ 
                        & $\textrm{69} \textcolor{lightgray}{\pm \textrm{9}}$ 
                        & \colorbox{tableblue}{$\textrm{100} \textcolor{lightgray}{\pm \textrm{0}}$} 
                        & $\textrm{59} \textcolor{lightgray}{\pm \textrm{9}}$ 
                        & $\textrm{67} \textcolor{lightgray}{\pm \textrm{7}}$ 
                        & $\textrm{99} \textcolor{lightgray}{\pm \textrm{2}}$ 
\\
                        & 2     
                        & $\textrm{-}$ 
                        & $\textrm{0} \textcolor{lightgray}{\pm \textrm{0}}$ 
                        & $\textrm{0} \textcolor{lightgray}{\pm \textrm{0}}$ 
                        & $\textrm{0} \textcolor{lightgray}{\pm \textrm{0}}$ 
                        & $\textrm{0} \textcolor{lightgray}{\pm \textrm{0}}$ 
                        & $\textrm{0} \textcolor{lightgray}{\pm \textrm{0}}$ 
                        & $\textrm{0} \textcolor{lightgray}{\pm \textrm{0}}$ 
                        & $\textrm{0} \textcolor{lightgray}{\pm \textrm{0}}$ 
                        & \colorbox{tableblue}{$\textrm{7} \textcolor{lightgray}{\pm \textrm{5}}$} 
\\
                        & 3     
                        & $\textrm{-}$ 
                        & $\textrm{0} \textcolor{lightgray}{\pm \textrm{0}}$ 
                        & $\textrm{0} \textcolor{lightgray}{\pm \textrm{0}}$ 
                        & $\textrm{0} \textcolor{lightgray}{\pm \textrm{0}}$ 
                        & $\textrm{0} \textcolor{lightgray}{\pm \textrm{0}}$ 
                        & $\textrm{0} \textcolor{lightgray}{\pm \textrm{0}}$ 
                        & $\textrm{0} \textcolor{lightgray}{\pm \textrm{0}}$ 
                        & $\textrm{0} \textcolor{lightgray}{\pm \textrm{0}}$ 
                        & $\textrm{0} \textcolor{lightgray}{\pm \textrm{0}}$ 
\\
                        & 4     
                        & $\textrm{-}$ 
                        & $\textrm{0} \textcolor{lightgray}{\pm \textrm{0}}$ 
                        & $\textrm{0} \textcolor{lightgray}{\pm \textrm{0}}$ 
                        & $\textrm{0} \textcolor{lightgray}{\pm \textrm{0}}$ 
                        & $\textrm{0} \textcolor{lightgray}{\pm \textrm{0}}$ 
                        & $\textrm{0} \textcolor{lightgray}{\pm \textrm{0}}$ 
                        & $\textrm{0} \textcolor{lightgray}{\pm \textrm{0}}$ 
                        & $\textrm{0} \textcolor{lightgray}{\pm \textrm{0}}$ 
                        & $\textrm{0} \textcolor{lightgray}{\pm \textrm{0}}$ 
\\
                        & 5     
                        & $\textrm{-}$ 
                        & $\textrm{0} \textcolor{lightgray}{\pm \textrm{0}}$ 
                        & $\textrm{0} \textcolor{lightgray}{\pm \textrm{0}}$ 
                        & $\textrm{0} \textcolor{lightgray}{\pm \textrm{0}}$ 
                        & $\textrm{0} \textcolor{lightgray}{\pm \textrm{0}}$ 
                        & $\textrm{0} \textcolor{lightgray}{\pm \textrm{0}}$ 
                        & $\textrm{0} \textcolor{lightgray}{\pm \textrm{0}}$ 
                        & $\textrm{0} \textcolor{lightgray}{\pm \textrm{0}}$ 
                        & $\textrm{0} \textcolor{lightgray}{\pm \textrm{0}}$ 
\\
\cmidrule(lr){2-11}
                        & agg.     
                        & $\textrm{-}$ 
                        & $\textrm{15} \textcolor{lightgray}{\pm \textrm{3}}$ 
                        & $\textrm{16} \textcolor{lightgray}{\pm \textrm{7}}$ 
                        & $\textrm{18} \textcolor{lightgray}{\pm \textrm{1}}$ 
                        & $\textrm{14} \textcolor{lightgray}{\pm \textrm{4}}$ 
                        & $\textrm{20} \textcolor{lightgray}{\pm \textrm{0}}$
                        & $\textrm{12} \textcolor{lightgray}{\pm \textrm{4}}$ 
                        & $\textrm{13} \textcolor{lightgray}{\pm \textrm{3}}$ 
                        & \colorbox{tableblue}{$\textrm{21} \textcolor{lightgray}{\pm \textrm{2}}$}
\\
\cmidrule(lr){1-11}
\multirow{6}{*}{\texttt{puzzle-4x6-play}}    & 1 
                        & $\textrm{-}$ 
                        & $\textrm{20} \textcolor{lightgray}{\pm \textrm{12}}$ 
                        & $\textrm{0} \textcolor{lightgray}{\pm \textrm{0}}$ 
                        & $\textrm{18} \textcolor{lightgray}{\pm \textrm{19}}$ 
                        & $\textrm{15} \textcolor{lightgray}{\pm \textrm{12}}$ 
                        & $\textrm{21} \textcolor{lightgray}{\pm \textrm{10}}$ 
                        & $\textrm{17} \textcolor{lightgray}{\pm \textrm{13}}$ 
                        & $\textrm{12} \textcolor{lightgray}{\pm \textrm{6}}$ 
                        & \colorbox{tableblue}{$\textrm{26} \textcolor{lightgray}{\pm \textrm{23}}$} 
\\
                        & 2     
                        & $\textrm{-}$ 
                        & $\textrm{0} \textcolor{lightgray}{\pm \textrm{1}}$ 
                        & $\textrm{0} \textcolor{lightgray}{\pm \textrm{0}}$ 
                        & $\textrm{0} \textcolor{lightgray}{\pm \textrm{0}}$ 
                        & \colorbox{tableblue}{$\textrm{1} \textcolor{lightgray}{\pm \textrm{1}}$} 
                        & $\textrm{0} \textcolor{lightgray}{\pm \textrm{0}}$ 
                        & $\textrm{0} \textcolor{lightgray}{\pm \textrm{0}}$ 
                        & \colorbox{tableblue}{$\textrm{1} \textcolor{lightgray}{\pm \textrm{1}}$} 
                        & $\textrm{0} \textcolor{lightgray}{\pm \textrm{0}}$ 
\\
                        & 3     
                        & $\textrm{-}$ 
                        & $\textrm{0} \textcolor{lightgray}{\pm \textrm{0}}$ 
                        & $\textrm{0} \textcolor{lightgray}{\pm \textrm{0}}$ 
                        & $\textrm{0} \textcolor{lightgray}{\pm \textrm{0}}$ 
                        & $\textrm{0} \textcolor{lightgray}{\pm \textrm{0}}$ 
                        & $\textrm{0} \textcolor{lightgray}{\pm \textrm{0}}$ 
                        & $\textrm{0} \textcolor{lightgray}{\pm \textrm{0}}$ 
                        & $\textrm{0} \textcolor{lightgray}{\pm \textrm{0}}$ 
                        & $\textrm{0} \textcolor{lightgray}{\pm \textrm{0}}$ 
\\
                        & 4     
                        & $\textrm{-}$ 
                        & $\textrm{0} \textcolor{lightgray}{\pm \textrm{0}}$ 
                        & $\textrm{0} \textcolor{lightgray}{\pm \textrm{0}}$ 
                        & $\textrm{0} \textcolor{lightgray}{\pm \textrm{0}}$ 
                        & $\textrm{0} \textcolor{lightgray}{\pm \textrm{0}}$ 
                        & $\textrm{0} \textcolor{lightgray}{\pm \textrm{0}}$ 
                        & $\textrm{0} \textcolor{lightgray}{\pm \textrm{0}}$ 
                        & $\textrm{0} \textcolor{lightgray}{\pm \textrm{0}}$ 
                        & $\textrm{0} \textcolor{lightgray}{\pm \textrm{0}}$ 
\\
                        & 5     
                        & $\textrm{-}$ 
                        & $\textrm{0} \textcolor{lightgray}{\pm \textrm{0}}$ 
                        & $\textrm{0} \textcolor{lightgray}{\pm \textrm{0}}$ 
                        & $\textrm{0} \textcolor{lightgray}{\pm \textrm{0}}$ 
                        & $\textrm{0} \textcolor{lightgray}{\pm \textrm{0}}$ 
                        & $\textrm{0} \textcolor{lightgray}{\pm \textrm{0}}$ 
                        & $\textrm{0} \textcolor{lightgray}{\pm \textrm{0}}$ 
                        & $\textrm{0} \textcolor{lightgray}{\pm \textrm{0}}$ 
                        & $\textrm{0} \textcolor{lightgray}{\pm \textrm{0}}$ 
\\
\cmidrule(lr){2-11}
                        & agg.     
                        & $\textrm{-}$ 
                        & $\textrm{4} \textcolor{lightgray}{\pm \textrm{5}}$ 
                        & $\textrm{0} \textcolor{lightgray}{\pm \textrm{0}}$
                        & $\textrm{4} \textcolor{lightgray}{\pm \textrm{9}}$
                        & $\textrm{3} \textcolor{lightgray}{\pm \textrm{5}}$ 
                        & $\textrm{4} \textcolor{lightgray}{\pm \textrm{4}}$ 
                        & $\textrm{3} \textcolor{lightgray}{\pm \textrm{6}}$ 
                        & $\textrm{3} \textcolor{lightgray}{\pm \textrm{3}}$ 
                        & \colorbox{tableblue}{$\textrm{5} \textcolor{lightgray}{\pm \textrm{10}}$}
\\
\bottomrule
\end{tabular}}
\end{table}
\section{Limitations and Future Works}
\label{appendix:limitation_future}

\subsection{Computation Cost Comparison}
\label{appendix:fullresults_computation}

The core components introduced by \textit{VAST} include an auxiliary value function and two policies with distinct functional roles. Consequently, with the same network architecture, \textit{VAST} has more parameters and incurs higher computational cost. We report the parameter comparison in Table~\ref{table:param_count} and the computation time in Table~\ref{table:computation_time}, evaluated under a fixed domain and network architecture using an NVIDIA A100 GPU. We consider this minor overhead a reasonable trade-off for the improved flexibility and optimality that \textit{VAST} offers.

\begin{table}[H]
\centering
\begin{minipage}{0.48\linewidth}
\centering
\captionof{table}{Total parameters required.}
\label{table:param_count}
\begin{tabular}{lc}
\toprule
Methods & Parameter Count (M) \\
\midrule
VAST        & $\approx 6.9$  \\
FQL         & $\approx 5.0$  \\
QC          & $\approx 5.2$  \\
VALUE-FLOWS & $\approx 4.9$  \\
FLOQ        & $\approx 4.5$  \\
\bottomrule
\end{tabular}
\end{minipage}
\hfill
\begin{minipage}{0.48\linewidth}
\centering
\captionof{table}{Computation time per 1M steps.}
\label{table:computation_time}
\begin{tabular}{lc}
\toprule
Methods & Computation Time (H) \\
\midrule
VAST        & $\approx 2.5$  \\
FQL         & $\approx 0.67$  \\
QC          & $\approx 2.25$  \\
VALUE-FLOWS & $\approx 3.75$  \\
FLOQ        & $\approx 1.2$  \\
\bottomrule
\end{tabular}
\end{minipage}
\end{table}

\subsection{Greedier Diffusion Policy Optimization}
\label{appendix:diffusion_optimization}

In this work, we employ a diffusion policy extraction method that is more stable, albeit conservative. Meanwhile, recent studies have proposed more aggressive optimization procedures for diffusion policies, including~\citet{liang2026dipole} and~\citet{li2026qam}. Future work will investigate enhanced policy learning strategies to improve robustness and controllability.

\end{document}